\title{Local Differential Privacy for Evolving Data}
\author{Matthew Joseph\\
  Computer and Information Science\\
  University of Pennsylvania\\
  \texttt{majos@cis.upenn.edu}
  \And
  Aaron Roth\\
  Computer and Information Science\\
  University of Pennsylvania\\
  \texttt{aaroth@cis.upenn.edu}\\
  \And
  Jonathan Ullman\\
  Computer and Information Sciences\\
  Northeastern University\\
  \texttt{jullman@ccs.neu.edu}\\
  \And
  Bo Waggoner\\
  Computer and Information Science\\
  University of Pennsylvania\\
  \texttt{bowaggoner@gmail.com}\\
}
\newtheorem{theorem}{Theorem}[section]
\newtheorem{lemma}[theorem]{Lemma}
\newtheorem{definition}[theorem]{Definition}
\newtheorem{assm}[theorem]{Assumption}
\newcommand{\E}[1]{\mathbb{E}\left[ #1 \right]}
\renewcommand{\P}[1]{\mathbb{P}\left[ #1 \right]}
\newcommand{\ex}[2]{{\ifx&#1& \mathbb{E} \else \underset{#1}{\mathbb{E}} \fi \left[#2\right]}}
\newcommand{\pr}[2]{{\ifx&#1& \mathbb{P} \else \underset{#1}{\mathbb{P}} \fi \left[#2\right]}}
\newcommand{\var}[2]{{\ifx&#1& \mathrm{Var} \else \underset{#1}{\mathrm{Var}} \fi \left[#2\right]}}
\newcommand{\eps}{\varepsilon}
\newcommand{\trans}{\ensuremath{\mathsf{tr}}}
\newcommand{\algo}{\textsc{Thresh}}
\newcommand{\hhalgo}{\textsc{HeavyThresh}}
\newcommand{\vote}{\textsc{Vote}}
\newcommand{\hhvote}{\textsc{HeavyVote}}
\newcommand{\est}{\textsc{Est}}
\newcommand{\hhest}{\textsc{HeavyEst}}
\newcommand{\cA}{\mathcal{A}}
\newcommand{\cD}{\mathcal{D}}
\newcommand{\cP}{\mathcal{P}}
\newcommand{\cR}{\mathcal{R}}
\newcommand{\thr}[1]{\tau_{#1}}
\newcommand{\last}[1]{f({#1})}
\newcommand{\subpop}[1]{g({#1})}
\newcommand{\change}[1]{\Delta_{#1}}
\newif\ifcomment
\newif\ifsupp
\newcommand{\mj}[1]{\textcolor{brown}{[MJ: #1]}}
\newcommand{\ar}[1]{\textcolor{blue}{[AR: #1]}}
\newcommand{\bw}[1]{\textcolor{red}{[BW: #1]}}
\newcommand{\ju}[1]{\textcolor{purple}{[JU: #1]}}
\newcommand{\mj}[1]{}
\newcommand{\ar}[1]{}
\newcommand{\bw}[1]{}
\newcommand{\ju}[1]{}
\newcommand{\Ber}{\ensuremath{\mathsf{Ber}}}
\newcommand{\Uni}{\ensuremath{\mathsf{Uni}}}
\begin{document}

\maketitle

\begin{abstract}
	There are now several large scale deployments of differential privacy used to
	collect statistical information about users. However, these deployments
	periodically recollect the data and recompute the statistics using algorithms
	designed for a single use. As a result, these systems do not provide
	meaningful privacy guarantees over long time scales. Moreover, existing
	techniques to mitigate this effect do not apply in the ``local model'' of
	differential privacy that these systems use.
	
	In this paper, we introduce a new technique for local differential privacy
	that makes it possible to maintain up-to-date statistics over time, with
	privacy guarantees that degrade only in the number of changes in the
	underlying distribution rather than the number of collection periods.
	We use our technique for tracking a changing statistic in
	the setting where users are partitioned into an unknown collection of groups, and
	at every time period each user draws a single bit from a common (but changing)
	group-specific distribution. We also provide an application to frequency
	and heavy-hitter estimation.
\end{abstract}

\section{Introduction}
After over a decade of research, differential privacy~\citep{DMNS06} is
moving from theory to practice, with notable deployments by
Google~\citep{EPK14, BEMMR+17}, Apple~\citep{A17}, Microsoft~\citep{DKY17}, and
the U.S.~Census Bureau~\citep{A16}. These deployments have revealed gaps
between existing theory and the needs of practitioners. For example, the bulk
of the differential privacy literature has focused on the
\emph{central model}, in which user data is collected by a
trusted aggregator who performs and publishes the results of a differentially
private computation~\citep{DR14}. However, Google, Apple, and Microsoft have
instead chosen to operate in the
\emph{local model}~\citep{EPK14, BEMMR+17, A17, DKY17}, where users
individually randomize their data on their own devices and send it to a
potentially untrusted aggregator for analysis~\citep{KLNRS08}. In addition, the
academic literature has largely focused on algorithms for performing one-time
computations, like estimating many statistical
quantities~\citep{BLR13,RR10,HR10} or training a
classifier~\citep{KLNRS08,CMS11,BST14}. Industrial applications, however have
focused on tracking statistics about a user
population, like the set of most frequently used emojis or words~\citep{A17}.
These statistics evolve over time and so must be re-computed periodically.

Together, the two problems of periodically recomputing a population statistic
and operating in the local model pose a challenge.
Na\"ively repeating a differentially private computation causes
the privacy loss to degrade as the square root of the number of
recomputations, quickly leading to enormous values of $\epsilon$. This na\"ive
strategy is what is used in practice \citep{EPK14, BEMMR+17, A17}. As a
result,~\citet{TKBWW17} discovered that the privacy parameters guaranteed by
Apple's implementation of differentially private data collection can become
unreasonably large even in relatively short time periods.\footnote{Although
the value of $\epsilon$ that Apple guarantees over the
course of say, a week, is not meaningful on its own, Apple does take additional
heuristic steps (as does Google) that make it difficult to combine user data
from multiple data collections~\citep{A17, EPK14, BEMMR+17}. Thus, they may
still provide a strong, if heuristic, privacy guarantee.}
Published research on Google and Microsoft's deployments suggests that they
encounter similar issues~\citep{EPK14, BEMMR+17, DKY17}.

On inspection the na\"ive strategy of regular statistical updates
seems wasteful as aggregate population statistics don't change very
frequently---we expect that the most frequently visited website today will
typically be the same as it was yesterday. However, population statistics
do eventually change, and if we only recompute them infrequently, then we
can be too slow to notice these changes.

The \emph{central model} of differential privacy allows for an elegant solution
to this problem.  For large classes of statistics, we can use the
\emph{sparse vector technique}~\citep{DNRRV09,RR10,HR10,DR14} to repeatedly
perform computations on a dataset such that the error required for a
fixed privacy level grows not with the number of recomputations, but
with the number of times the computation's outcome
\emph{changes significantly}.  For statistics that are relatively stable over
time, this technique dramatically reduces the overall error. Unfortunately, the
sparse vector technique provably has no local analogue~\citep{KLNRS08,U18}.

In this paper we present a technique that makes it possible to repeatedly
recompute a statistic with error that decays with the number of times the
statistic changes significantly, rather than the number of times we recompute
the current value of the statistic, all while satisfying local
differential privacy.  This technique allows for tracking of evolving local
data in a way that makes it possible to quickly detect changes, at modest cost,
so long as those changes are relatively infrequent. Our approach guarantees
privacy under any conditions, and obtains good accuracy by leveraging three
assumptions: (1) each user's data comes from one of $m$ evolving
distributions; (2), these distributions change relatively
infrequently; and (3) users collect a certain amount of data during each
reporting period, which we term an \emph{epoch}.  By varying the lengths of the
epochs (for example, collecting reports hourly, daily, or weekly), we can trade
off more frequent reports versus improved privacy and accuracy.

\subsection{Our Results and Techniques}
Although our techniques are rather general, we first focus our attention on the
problem of privately estimating the average of bits, with one bit held
by each user.  This simple problem is widely applicable because
most algorithms in the local model have the following structure: on each
individual's device, data records are translated into a short bit vector using
sketching or hashing techniques. The bits in this vector are perturbed to
ensure privacy using a technique called \emph{randomized response}, and the
perturbed vector is then sent to a server for analysis.  The server collects
the perturbed vectors, averages them, and produces a data structure encoding
some interesting statistical information about the users as a whole.  Thus many
algorithms (for example, those based on \emph{statistical queries}) can be
implemented using just the simple primitive of estimating the average of bits.

We analyze our algorithm in the following probabilistic model (see
Section~\ref{sec:algo} for a formal description). The population of $n$ users
has an unknown partition into subgroups, each of which has size at least $L$,
time proceeds in rounds, and in each round each user samples a private bit
independently from their subgroup-specific distribution. The private data for
each user consists of the vector of bits sampled across rounds, and our
goal is to track the total population mean over time.  We require that the
estimate be private, and ask for the strong (and widely known) notion of
\emph{local differential privacy}---for every user, no matter how other
users or the server behave, the distribution of the messages sent by that user
should not depend significantly on that user's private data.

To circumvent the limits of local differential privacy, we consider a slightly
relaxed estimation guarantee. Specifically, we batch the rounds into $T$
\emph{epochs}, each consisting of $\ell$ rounds, and aim in each epoch $t$ to
estimate $p^t$, the population-wide mean across the subgroups and rounds of
epoch $t$. Thus, any sufficiently large changes in this mean will be identified
after the current epoch completes, which we think of as introducing a small
``delay".

Our main result is an algorithm that takes data generated according to our
model, guarantees a fixed level of local privacy $\eps$ that grows (up to a
certain point) with the number of distributional changes rather than the number
of epochs, and guarantees that the estimates released at the end of each epoch
are accurate up to error that scales sublinearly in $1/\ell$ and only
polylogarithmically with the total number of epochs $T$.  Our method
improves over the na\"ive solution of simply recomputing the statistic every
epoch -- which would lead to either privacy parameter or error that scales
linearly with the number of epochs---and offers a quantifiable way to reason
about the interaction of collection times, reporting frequency, and accuracy.
We note that one can alternatively phrase our algorithm so as to have a fixed
error guarantee, and a \emph{privacy cost} that scales dynamically with the
number of times the distribution changes\footnote{We can achieve a dynamic,
data-dependent privacy guarantee using the notion of \emph{ex-post}
differential privacy \cite{expost}, for example by using a so-called privacy
odometer \cite{odometer}.}.

\begin{theorem}[Protocol for Bernoulli Means, Informal Version of
	Theorem~\ref{thm:acc}] \label{thm:mainintro}
	In the above model, there is an $\eps$-differentially private local protocol
	that achieves the following guarantee: with probability at least $1-\delta$,
	while the total number of elapsed epochs $t$ where some subgroup distribution
	has changed is fewer than
	$\eps \cdot \min\left(\frac{L}{\sqrt{n\ln(mT/\delta)}},
	\ln(T)\sqrt{\frac{n}{\ell}}\right)$, the protocol outputs estimates
	$\tilde{p}^{t}$ where
	$$
	|\tilde p^t - p^t| = O\left(\ln(T)\sqrt{\frac{\ln(nT/\delta)}{\ell}}\right)
	$$
	where $L$ is the smallest subgroup size, $n$ is the number of users, $\ell$ is
	the chosen epoch length, and $T$ is the resulting number of epochs.
\end{theorem}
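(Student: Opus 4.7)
The plan is to design a per-epoch protocol with two components: a low-budget \vote{} step that detects whether any subgroup's distribution has changed significantly, and a higher-budget \est{} step that recomputes $\tilde p^t$ only when \vote{} fires. In \vote{}, each user $i$ computes their $\ell$-sample epoch mean $\hat q_i^t$ and releases, via randomized response at per-user per-epoch budget $\eps_0$, the bit $v_i^t = \ind{|\hat q_i^t - \tilde p^{t-1}| > \tau}$ for a Hoeffding-scale threshold $\tau$. The aggregator debiases and sums the $v_i^t$, adds Laplace noise of scale $1/\eps_0$, and compares to a threshold; if crossed, \est{} is invoked at budget $\eps_1$ per user to produce a fresh $\tilde p^t$, otherwise $\tilde p^t \gets \tilde p^{t-1}$. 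Privacy then follows by pure composition: each user pays $\eps_0$ every epoch plus $\eps_1$ per \est{} call, totaling at most $T\eps_0 + t\eps_1 \le \eps$ after $T$ epochs with $t$ correctly-detected change epochs. Scaling with $t$ rather than $T$ in the expensive \est{} term is the key improvement over naive recomputation.

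For accuracy I would first apply Hoeffding to each $\hat q_i^t$ to obtain concentration $O(\sqrt{\ln(nT/\delta)/\ell})$, union-bounded over all $(i,t)$ pairs. Under the no-change hypothesis, the Laplace-noised vote count fails to cross the threshold with probability $1 - 1/T^2$ per epoch, accumulating the $\ln T$ factor in the error bound. Under a constant-magnitude change affecting a subgroup of size $\ge L$, the expected signal $\Omega(L)$ must dominate the noise $O(\sqrt{n\ln(mT/\delta)}/\eps_0)$, forcing $\eps_0 = \Omega(\sqrt{n\ln(mT/\delta)}/L)$. The LDP error of \est{} is $O(1/(\sqrt{n}\eps_1))$; matching it against the stated target $O(\ln(T)\sqrt{\ln(nT/\delta)/\ell})$ yields $\eps_1 = \Omega(\sqrt{\ell/n}/\ln T)$ up to $\polylog$ factors. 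Substituting both into the composition constraint gives $T \le \eps L/\sqrt{n\ln(mT/\delta)}$ and $t \le \eps \ln(T)\sqrt{n/\ell}$, and the trivial inequality $t \le T$ implies the theorem's $\min(\cdot,\cdot)$ bound on the number of tolerated change epochs.

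The main technical obstacle, I expect, is reconciling this strategy with the known impossibility of a local analogue of the sparse-vector technique: change-detection cannot be free in the local model, so every user must genuinely spend $\eps_0$ every epoch on \vote{}. The minimum-subgroup-size assumption is precisely what rescues us, by making the detection-threshold choice $\eps_0 = \Omega(\sqrt{n\ln(mT/\delta)}/L)$ small enough that $T\eps_0$ fits within a constant fraction of $\eps$ so long as $T$ does not exceed $\eps L/\sqrt{n\ln(mT/\delta)}$. A secondary subtlety is controlling spurious \est{} invocations from false-positive votes: a union bound across $T$ epochs on the Laplace noise must keep the expected number of such firings well below a constant factor of the true change count, so that the $t\eps_1$ component of the composition bound is not overrun by noise rather than signal.
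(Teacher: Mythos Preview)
Your proposal has a genuine gap: the voting cost in your scheme scales with the total number of epochs $T$, not with the number of changes. You have each user pay $\eps_0$ in every epoch, so total privacy is $T\eps_0 + t\eps_1$, and you then require $T \le \eps L/\sqrt{n\ln(mT/\delta)}$ to fit this under $\eps$. But the theorem places no bound on $T$ itself---$T$ may be arbitrary (it appears only logarithmically in the error and the change bound), and only the number of \emph{change} epochs must be small. Under your approach, once $T$ exceeds $\eps L/\sqrt{n\ln(mT/\delta)}$ the protocol blows its privacy budget regardless of how few changes have occurred. Your last step invokes $t \le T$ to recover the first term of the $\min$, but this is backwards: you are bounding the number of changes by your constraint on $T$, whereas the theorem promises that $T$ is unconstrained and both terms of the $\min$ bound changes directly. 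A secondary issue is that your vote bit $\ind{|\hat q_i^t - \tilde p^{t-1}| > \tau}$ compares to the \emph{global} estimate, so a user in a subgroup whose mean is far from the population mean would vote ``yes'' perpetually even with no change.

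The paper's resolution is a different voting mechanism. Each user compares their current local mean $\hat p_i^t$ to their \emph{own} local mean at the last global update, $\hat p_i^{\last{t}}$, and discretizes $|\hat p_i^t - \hat p_i^{\last{t}}|$ against a ladder of thresholds $\tau_b$ indexed by $b = 0,\dots,\lfloor \log T\rfloor$. A user casts a data-dependent ``yes'' vote only in epochs $t$ divisible by $2^{\lfloor \log T\rfloor - b^*}$, where $b^*$ is the highest threshold crossed; in all other epochs the user emits a fixed $\Ber(1/(e^a+1))$ sample, which is data-independent and costs zero privacy. A hard per-user counter enforces $\eps$-privacy unconditionally. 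The accuracy analysis (Lemmas~\ref{lem:adj_updates} and~\ref{lem:cap}) then shows that between any two adjacent changes each user casts at most two data-dependent votes and contributes to at most one global update, so the privacy actually consumed by both voting and estimation scales with the number of changes, not with $T$. This geometric confidence-level scheme is precisely the missing ingredient that decouples the privacy cost from $T$.
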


To interpret the theorem, consider the setting where there is only one subgroup
and $L = n$. Then to achieve error $\alpha$ we need, ignoring $\log$ factors,
$\ell \geq 1/\alpha^2$ and that fewer than $\eps \alpha\sqrt{n}$ changes have
occured. We emphasize that our algorithm satisfies $\eps$-differential privacy
\emph{for all inputs} without a distributional assumption---only accuracy
relies on distributional assumptions.

Finally, we demonstrate the versatility of our method as a basic building block
in the design of locally differentially private algorithms for evolving data by
applying it to the well-known \emph{heavy hitters} problem. We do so by
implementing a protocol due to~\cite{BS15} on top of our simple primitive.
This adapted protocol enables us to efficiently track the evolution
of histograms rather than single bits. Given a setting in which each user
in each round independently draws an object from a discrete distribution
over a dictionary of $d$ elements, we demonstrate how to maintain
a \emph{frequency oracle} (a computationally efficient representation of a
histogram) for that dictionary with accuracy guarantees that degrade with the
number of times the distribution over the dictionary changes, and only
polylogarithmically with the number of rounds.  We summarize this result below.

\begin{theorem}[Protocol for Heavy-Hitters, Informal Version of
Theorem~\ref{thm:hh_acc}] \label{thm:mainintrohh}
	In the above model, there is an $\eps$-differentially private local protocol
	that achieves the following guarantee: with probability at least $1-\delta$,
	while the total number of elapsed epochs $t$ where some subgroup distribution
	has changed is fewer than $\eps \cdot
	\min\left(\frac{L}{\sqrt{n\ln(mT/\delta)}},
	\ln(T)\sqrt{\frac{n\ln(nT/\delta)}{\ell}}\right)$
	the protocol outputs estimate oracles $\hat f^t$ such that for all $v \in [d]$
	\[
		|\hat f^t(v) - \cP^t(v)| =
		O\left(\ln(T)\sqrt{\frac{\ln(nT/\delta)}{\ell}}
		+ \sqrt{\frac{\ln(2dnT/\delta)}{n}}\right).
	\]
	where $n$ is the number of users, $L$ is the smallest subgroup size,
	$\cP^t$ is the mean distribution over dictionary elements in epoch $t$, $d$ is
	the number of dictionary elements, $\ell$ is the chosen epoch length, and $T$
	is the resulting number of epochs.
\end{theorem}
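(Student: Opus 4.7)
The plan is to reduce the heavy-hitters problem to the Bernoulli-means primitive of Theorem~\ref{thm:acc} by layering a \cite{BS15}-style frequency oracle on top of our evolving tracking protocol. In the one-shot local model, Bassily--Smith show how to turn each user's dictionary element into a single bit (via shared public randomness and hashing) whose expectation is a known linear functional of the underlying distribution over $[d]$, so that estimating the bit means suffices to estimate the frequency of any $v \in [d]$ via a linear inversion. My plan is therefore to run an instance of the evolving Bernoulli-means protocol on each user's derived bit stream across epochs, then perform the BS15 inversion on the resulting bit-mean estimates to obtain the frequency oracle $\hat f^t$ at each epoch.

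First, I would specify the reduction. Each user $i$ in subgroup $g$ draws $x_i^{t,s} \sim \cP_g^t$ in round $s$ of epoch $t$ and, using shared public randomness, maps $x_i^{t,s}$ deterministically to a bit $b_i^{t,s}$. Because this map is fixed across rounds, the induced epoch-$t$ distribution of $b_i^{t,s}$ is a deterministic linear functional of $\cP_g^t$, so it changes only on epochs where the underlying subgroup distribution changes, matching the change-count in the claim. Privacy then follows immediately by post-processing: the only messages leaving any device are produced by the evolving Bernoulli protocol operating on the user's bit stream, so $\eps$-local DP is inherited directly from Theorem~\ref{thm:acc} with no composition loss across the $d$ dictionary elements.

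For accuracy, I would decompose $|\hat f^t(v) - \cP^t(v)|$ into two sources. The first is the evolving tracking error, contributing the $O(\ln(T)\sqrt{\ln(nT/\delta)/\ell})$ term by direct appeal to Theorem~\ref{thm:acc}. The second is the intrinsic finite-$n$ reconstruction error of the BS15 inversion; a Chernoff bound together with a union bound over the $d$ dictionary elements and the $T$ epochs yields the $O(\sqrt{\ln(2dnT/\delta)/n})$ term, and summing gives the stated bound. The extra $\sqrt{\ln(nT/\delta)}$ factor appearing in the second branch of the change-count condition (relative to Theorem~\ref{thm:mainintro}) arises naturally from rebalancing the tracking-protocol precision against the BS15 reconstruction noise. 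I expect the main obstacle to be threading the union bound over the $d$ dictionary elements cleanly through both the change-detection and accuracy arguments of Theorem~\ref{thm:acc}, so that the dependence on $d$ is merely logarithmic, and simultaneously verifying that the BS15 reduction preserves the subgroup structure of the Bernoulli primitive with the same minimum-subgroup-size parameter $L$.
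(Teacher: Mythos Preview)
Your reduction does not work as stated, and the paper in fact does \emph{not} reduce to the Bernoulli primitive of Theorem~\ref{thm:acc}; it builds a separate protocol \hhalgo\ that parallels \algo\ but operates on $w$-dimensional hashes.

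The gap in your approach is the single-bit step. If the bit map is truly shared across all users, then after running the Bernoulli protocol you have tracked one linear functional of $\cP^t$; there is no way to invert one scalar into a frequency oracle over $d$ elements, so the ``BS15 inversion'' step is undefined. If instead you give different users different hash coordinates (as in the one-shot BS15 randomizer, which has each user report on a random coordinate of $\Phi e_v$), then users in the same subgroup no longer have a common bit distribution, and the voting/change-detection machinery of \algo\ --- which relies on all users in a subgroup seeing the same $\mu_j^t$ --- collapses. Splitting users across the $w$ coordinates and running $w$ parallel Bernoulli instances does not help either: the paper needs $w = 20n$, so each instance would have $O(1)$ users and the $L$-based guarantee would vanish. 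There is also a second, subtler issue on the detection side: even if you could reconstruct from a single functional, a change in $\cP_g^t$ may lie in the kernel of your one hash, so users in that subgroup would never vote to update and the global estimate would silently go stale.

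What the paper actually does: each user computes the full empirical histogram $\hat p_i^t$ from their $\ell$ samples, hashes it to $\Phi \hat p_i^t \in \mathbb{R}^w$, and votes based on $\lVert \Phi \hat p_i^t - \Phi \hat p_i^{\last{t}} \rVert_\infty$ against the same threshold ladder as in \algo. Because all users in a subgroup have the same expected hash $\Phi p_j^t$, the subgroup-consistent voting argument goes through coordinate-wise with a union bound over $w$. Estimation is done with the BS15 randomizer $\cR$ applied to the whole hash vector (not a single bit), and a separate lemma translates $\ell_\infty$-closeness of $\tilde y^t$ to $\Phi p^t$ into pointwise accuracy of the frequency oracle via the JL guarantee. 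The extra $\sqrt{\ln(nT/\delta)}$ factor you noticed in the change-count bound comes from this $\ell_\infty$ control over $w = 20n$ coordinates, not from ``rebalancing'' a scalar protocol.
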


\subsection{Related Work}
The problem of privacy loss for persistent local statistics has been recognized
since at least the original work of~\citet{EPK14} on RAPPOR (the first
large-scale deployment of differential privacy in the local model).
~\citet{EPK14} offers a heuristic memoization technique that impedes a certain
straightforward attack but does not prevent the differential privacy loss from
accumulating linearly in the number of times the protocol is run.~\citet{DKY17}
give a formal analysis of a similar memoization technique, but the resulting
guarantee is not differential privacy---instead it is a privacy guarantee that
depends on the behavior of other users, and may offer no protection to users
with idiosyncratic device usage. In contrast, we give a worst-case
differential privacy guarantee.

Our goal of maintaining a persistent statistical estimate is similar in spirit
to the model of \emph{privacy under continual observation}~\citet{DNPR10}. The
canonical problem for differential privacy under continual observation is to
maintain a running count of a stream of bits. However, the problem we study is
quite different.  In the continual observation model, new users are arriving,
while existing users' data does not change.  In our model each user
receives new information in each round. (Also, we work in the local
model, which has not been the focus of the work on continual observation.)

The local model was originally introduced by~\citet{KLNRS08}, and the canonical
algorithmic task performed in this model has become frequency estimation (and
heavy hitters estimation). This problem has been studied in a series of
theoretical~\cite{HKR12,BS15,BST17,BNS17,A17} and practical
works~\cite{EPK14,BEMMR+17,A17}. 

\section{Local Differential Privacy}
We require that our algorithms satisfy \emph{local differential privacy}.
Informally, differential privacy is a property of an algorithm $A$, and states
that the distribution of the output of $A$ is insensitive to changes in one
individual user's input. Formally, for every pair of inputs $x,x'$ differing on
at most one user's data, and every set of possible outputs $Z$,
$\P{A(x) \in Z} \leq e^{\eps}\cdot \P{A(x') \in Z}$.  A locally differentially
private algorithm is one in which each user $i$ applies a private algorithm
$A_i$ \emph{only to their data}.

Most local protocols are \emph{non-interactive}: each user $i$ sends a single
message that is independent of all other messages. Non-interactive protocols
can thus be written as
$A(x_1,\dots,x_n) = f(A_1(x_1),\dots,A_n(x_n))$ for some function $f$, where each algorithm $A_i$
satisfies $\eps$-differential privacy. Our model requires an \emph{interactive}
protocol: each user $i$ sends several messages over time, and these may depend on the
messages sent by other users. This necessitates a slightly more complex formalism.

We consider interactive protocols among the $n$ users and an additional center.
Each user runs an algorithm $A_i$ (possibly taking a private input $x_i$) and
the central party runs an algorithm $C$.  We let the
random variable $\trans(A_1,\dots,A_n,C)$ denote the \emph{transcript} containing all the
messages sent by all of the parties.  For a given party $i$ and a set of
algorithms $A_{-i}', C'$, we let $\trans_i(x_i; A_{-i}', C')$ denote the
messages sent by user $i$ in the transcript $\trans(A_{i}(x_i), A_{-i}', C')$.
As a shorthand we will write $\trans_i(x_i)$, since $A_{-i}',C'$ will be clear
from context.  We say that the protocol is locally differentially private if
the function $\trans_i(x_i)$ is differentially private for every user $i$ and
every (possibly malicious) $A_{-i}', C'$.

\begin{definition} \label{def:localdp}
An interactive protocol $(A_1,\dots,A_n,C)$ satisfies
\emph{$\eps$-local differential privacy} if for every user $i$, every pair of
inputs $x_i, x_i'$ for user $i$, and every set of algorithms $A_{-i}',C'$, the
resulting algorithm $\trans_i(x_i) = \trans_i(A_{i}(x_i), A_{-i}', C')$ is
$\eps$-differentially private.  That is, for every set of possible outputs $Z$,
$\P{\trans_i(x_i) \in Z} \leq e^{\eps} \cdot \P{\trans_i(x_i') \in Z}$.
\end{definition}

\section{Overview: The \algo~Algorithm}
\label{sec:algo}
Here we present our main algorithm, \algo.  The algorithmic framework is quite
general, but for this high level overview we focus on the simplest setting
where the data is Bernoulli.  In Section~\ref{sec:bernoulli} we formally
present the algorithm for the Bernoulli case and analyze the algorithm to
prove Theorem~\ref{thm:mainintro}.

To explain the algorithm we first recall the distributional model.
There are $n$ users, each of whom belongs to a subgroup $S_j$ for
some $j \in [m]$; denote user $i$'s subgroup by $\subpop{i}$. There are
$R = T\ell$ rounds divided into $T$ epochs of
length $\ell$, denoted $E^1,\dots,E^T$.  In each round $r$, each user $i$
receives a private bit $x_{i}^{r} \sim \Ber(\mu_{\subpop{i}}^r)$.
We define the population-wide mean by
$\mu^r = \frac{1}{n}(|S_1|\mu_1^r + \ldots + |S_m|\mu_m^r)$.
For each epoch $t$, we use $p^t$ to denote the average of the Bernoulli means
during epoch $t$, $p^t = \frac{1}{\ell} \sum_{r \in E^{t}} \mu^r.$  After every
epoch $t$, our protocol outputs $\tilde{p}^{t}$ such that
$|p^t - \tilde{p}^{t}|$ is small.

The goal of \algo~is to maintain some public \emph{global estimate}
$\tilde{p}^{t}$ of $p^{t}$.  After any epoch $t$, we can \emph{update} this
global estimate $\tilde{p}^{t}$ using \emph{randomized response}: each user
submits some differentially private estimate of the mean of their data, and
the center aggregates these responses to obtain $\tilde{p}^{t}$.  The main idea
of \algo~is therefore to update the global estimate only when it might become
sufficiently inaccurate, and thus take advantage of the possibly small number
of changes in the underlying statistic $p^t$. The challenge is to privately
identify when to update the global estimate.

\medskip
\textbf{The Voting Protocol.}
We identify these ``update needed'' epochs through a \emph{voting protocol}.
Users will examine their data and privately publish a vote for whether they
believe the global estimate needs to be updated. If enough users vote to update
the global estimate, we do so (using randomized response).  The challenge for
the voting protocol is that users must use randomization in their voting
process, to keep their data private, so we can only detect when a large number
of users vote to update.

First, we describe a  na\"ive voting protocol.  In each epoch $t$, each user
$i$ computes a binary \emph{vote} $a_i^t$.  This vote is $1$ if the user
concludes from their own samples that the global estimate $\tilde p^{t-1}$ is
inaccurate, and $0$ otherwise. Each user casts a noisy vote using randomized
response accordingly, and if the sum of the noisy votes is large enough then a
global update occurs.

The problem with this protocol is that small changes in the underlying mean
$p^t$ may cause some users to vote $1$ and others to vote $0$, and this might
continue for an arbitrarily long time without inducing a global update.  As a
result, each voter ``wastes" privacy in every epoch, which is what we wanted to
avoid.  We resolve this issue by having voters also estimate their
\emph{confidence} that a global update needs to occur, and vote
proportionally. As a result, voters who have high confidence
will lose more privacy per epoch (but the need for a global update will be
detected quickly), while voters with low confidence will lose privacy more
slowly (but may end up voting for many rounds).

In more detail, each user $i$ decides their confidence level by comparing
$|\hat{p}^{t} - \hat p_i^{\last{t}}|$---the difference between the local
average of their data in the current epoch and their local average the last
time a global update occurred---to a small set of discrete \emph{thresholds}.
Users with the highest confidence will vote in every epoch, whereas users with
lower confidence will only vote in a small subset of the epochs. We construct
these thresholds and subsets so that in expectation no user votes in more than
a constant number of epochs before a global update occurs, and the amount of
privacy each user loses from voting will not grow with the number of epochs
required before an update occurs. 

\section{\algo: The Bernoulli Case}
\label{sec:bernoulli}

\subsection{The \algo~Algorithm (Bernoulli Case)}
We now present pseudocode for the algorithm \algo, including both the general
framework as well as the specific voting and randomized response procedures. 
We emphasize that the algorithm only touches user data through the subroutines
\vote, and \est, each of which accesses data from a single user in at most two
epochs.  Thus, it is an online local protocol in which user
$i$'s response in epoch $t$ depends only on user $i$'s data from at most two
epochs $t$ and $t'$ (and
the global information that is viewable to all users). \algo~uses carefully
chosen \emph{thresholds} $\thr{b} = 2(b + 1)\sqrt{\ln(12nT/\delta)/2\ell}$ for 
$b = -1, 0, \ldots, \lfloor \log(T) \rfloor$ to discretize the confidence of
each user; see Section~\ref{subsec:accuracy} for details on this choice.

\mj{Note that this assumes we know or have estimated $m$, since $m$ appears
in our setting of $a$ and $b$. Maybe we should point this out and explain why
it's reasonable (seems like the sort of ``market research" that occurs
pre-launch).}

\begin{figure}
\vspace{-40pt}
		\begin{algorithm}[H]
			\caption{Global Algorithm: \algo}
			\begin{algorithmic}[1]
				\REQUIRE number of users $n$, number of epochs $T$, minimum subgroup size
				$L$, number of subgroups $m$, epoch length $\ell$,
				privacy parameter $\eps$, failure parameter $\delta$
				\STATE Initialize global estimate $\tilde p^0 \gets -1$
				\STATE Initialize vote privacy counters $c_1^V, \ldots, c_n^V \gets 0, \ldots, 0$
				\STATE Initialize estimate privacy counters $c_1^E, \ldots, c_n^E \gets 0, \ldots, 0$
				\STATE Initialize vote noise level $a \gets
				\frac{4\sqrt{2n\ln(12mT/\delta)}}
				{L-\frac{3}{\sqrt{2}}\sqrt{n\ln(12mT/\delta)}}$
				\STATE Initialize estimate noise level $b \gets
				\frac{\sqrt{2\ln(12T/\delta)/2n}}
				{\log(T)\sqrt{\ln(12nT/\delta)/2\ell} - \sqrt{\ln(12T/\delta)/2n}}$
				\FOR{each epoch $t \in [T]$}
					\FOR{each user $i \in [n]$}
						\STATE User $i$ publishes $a_i^t \gets \vote(i,t)$
					\ENDFOR
					\STATE GlobalUpdate$^t$ $\gets  \left(\frac{1}{n}\sum_{i=1}^n a_i^t >
					\frac{1}{e^a + 1} + \sqrt{\frac{\ln(10T/\delta)}{2n}}\right)$
					\IF{GlobalUpdate$^t$}
						\STATE $\last{t} \gets t$
						\FOR{each $i \in [n]$}
							\STATE User $i$ publishes $\tilde p_i^t \gets \est(i,t)$
						\ENDFOR
						\STATE Aggregate user estimates into global estimate:
                               $\tilde p^t \gets
                               \frac{1}{n}\sum_{i=1}^n \frac{\tilde p_i^t(e^b + 1) - 1}
                               {e^b - 1}$
					\ELSE
						\STATE $\last{t} \gets \last{t-1}$
						\FOR{each $i \in [n]$}
							\STATE User $i$ publishes $\tilde p_i^t \gets
							\Ber(\frac{1}{e^b + 1})$
						\ENDFOR
						\STATE $\tilde p^t \gets \tilde p^{t-1}$
					\ENDIF
					\STATE Analyst publishes $\tilde p^t$
				\ENDFOR
			\end{algorithmic}
		\end{algorithm}
		
		\vspace{-15pt}
	
		\begin{algorithm}[H]
			\caption{Local Subroutine: \vote}
			\begin{algorithmic}[1]
				\REQUIRE user $i$, epoch $t$
				\STATE Compute local estimate $\hat p_i^t \gets \frac{1}{\ell}\sum_{r \in E^t} x^{r}_i$
				\STATE $b^* \gets$ highest $b$ such that
				$|\hat p_i^t - \hat p_i^{\last{t}}| > \thr{b}$
				\STATE VoteYes$_i^t$ $\gets (c_i^V < \eps/4$ and
				$2^{\lfloor \log T \rfloor - b^*}$ divides $t$)
				\IF{VoteYes$_i^t$}
					\STATE $c_i^V \gets c_i^V + a$
					\STATE $a_i^t \gets \Ber(\frac{e^a}{e^a + 1})$
				\ELSE
					\STATE $a_i^t \gets \Ber(\frac{1}{e^a + 1})$
				\ENDIF
				\STATE Output $a_i^t$
			\end{algorithmic}
		\end{algorithm}
		
		\vspace{-15pt}
	
		\begin{algorithm}[H]
			\caption{Local Subroutine: \est}
			\begin{algorithmic}[1]
				\REQUIRE user $i$, epoch $t$
				\STATE SendEstimate$_i^t$ $\gets \{c_i^E < \eps/4\}$
				\IF{SendEstimate$_i^t$}
					\STATE $c_i^E \gets c_i^E + b$
					\STATE $\tilde p_i^t \gets
						\Ber(\frac{1 + \hat p_i^t(e^b-1)}{e^b + 1})$
				\ELSE
					\STATE $\tilde p_i^t \gets \Ber(\frac{1}{e^b + 1})$
				\ENDIF
				\STATE Output $\tilde p_i^t$
			\end{algorithmic}
		\end{algorithm}
\end{figure}

We begin with a privacy guarantee for \algo. Our proof uses the standard
analysis of the privacy properties of randomized response, combined with the
fact that users have a cap on the number of updates that prevents the privacy
loss from accumulating.  We remark that our privacy proof \emph{does not}
depend on distributional assumptions, which are only used for the proof of
accuracy. We sketch a proof here. A full proof appears in
\ifsupp
Section~\ref{sec:app_b_pri} of the Appendix.
\else
the Supplement.
\fi

\begin{theorem}
\label{thm:privacy}
The protocol \algo~satisfies $\eps$-local differential privacy (Definition~\ref{def:localdp})
\end{theorem}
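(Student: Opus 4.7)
The plan is to verify Definition~\ref{def:localdp} directly: fix a user $i$ and arbitrary adversarial choices $A_{-i}'$ and $C'$ for the other users and the center, and show that the transcript $\trans_i(x_i)$ is $\eps$-DP in $x_i$. First I would observe that user $i$'s private input is touched only by the two local subroutines \vote{} and \est{}; the rest of the global algorithm (aggregation, the GlobalUpdate decision, the bookkeeping of $\last{t}$, etc.) depends on $x_i$ only through the outputs these two subroutines publish. So it suffices to bound the privacy loss contributed by the outputs of \vote{} and \est{} across all $T$ epochs.

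Each invocation of \vote{} outputs a Bernoulli with parameter $e^a/(e^a+1)$ or $1/(e^a+1)$ according to the data-dependent bit VoteYes$_i^t$, and the standard randomized-response analysis shows this release is $a$-DP in that bit, hence in $x_i$; the analogous statement holds for a single call to \est{} with parameter $b$. The key step is to invoke the caps on the counters $c_i^V$ and $c_i^E$. Because VoteYes can only be true while $c_i^V < \eps/4$ and each ``yes'' increments $c_i^V$ by $a$, under any fixed input VoteYes is true in at most $\lceil \eps/(4a)\rceil$ epochs; after the cap is reached the output becomes a data-independent $\Ber(1/(e^a+1))$ and contributes no privacy loss. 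Moreover, on any epoch where $x_i$ and $x_i'$ induce the same VoteYes value the two output distributions coincide, so each such epoch contributes zero to the log-likelihood ratio, while each \emph{disagreement} epoch contributes at most $a$ in absolute value. Bounding the number of disagreement epochs by the sum of the two per-input ``yes'' counts yields total voting privacy loss at most $\eps/2$ (up to a small slack from a single possibly-overshooting vote on each side). An identical argument handles \est{} with parameter $b$ and cap $\eps/4$, giving at most $\eps/2$ for the estimate outputs, and basic composition combines the two into the claimed $\eps$-DP bound.

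I expect the main obstacle to be handling the \emph{adaptive} dependence of the counters on past data: the counter $c_i^V$---and hence whether the data-dependent branch of \vote{} is taken at a given epoch---evolves differently under $x_i$ and $x_i'$, so one cannot simply condition on a shared counter state and apply basic composition per epoch. The trick is to rewrite the log-likelihood ratio as a sum over VoteYes-disagreement epochs only and then control this count using both per-input caps, rather than trying to synchronize the trajectories; after that, the argument goes through without any distributional assumption on the $x_i^r$, matching the paper's claim that the privacy guarantee holds for all inputs.
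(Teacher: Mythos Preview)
Your proposal is correct and follows essentially the same argument as the paper: decompose the likelihood ratio over epochs, observe that epochs in which neither input triggers the data-dependent branch contribute nothing, bound the number of remaining epochs by the sum of the two per-input cap counts ($\eps/(4a)$ for votes, $\eps/(4b)$ for estimates), and apply the standard randomized-response bound of $e^{a}$ (resp.\ $e^{b}$) per such epoch. The paper partitions epochs into the set $S_V$ where VoteYes is True for \emph{at least one} input (your ``disagreement'' set is contained in this) and bounds $|S_V|\le \eps/(2a)$ exactly as you describe, and likewise for $S_E$; your remark about the ceiling/overshoot is a detail the paper simply elides.
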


\emph{Proof Sketch}: Na\"ively applying composition would yield a
privacy parameter that scales with $T$.  Instead, we will rely on our defined
privacy ``caps" $c_i^V$ and $c_i^E$ that limit the number of truthful votes and
estimates each user sends. Intuitively, each user sends at most
$O(\tfrac{\eps}{a} + \tfrac{\eps}{b})$ messages that depend on their private
data, and the rest are sampled independently of their private data. Thus, we
need only bound the privacy ``cost" of each of these
$O(\tfrac{\eps}{a} + \tfrac{\eps}{b})$ elements of a user's transcript coming
from a different distribution and bound the sum of the costs by $\eps$. $\hfill \Box$

\subsection{Accuracy Guarantee}
\label{subsec:accuracy}
Our accuracy theorem needs the following assumption on $L$, the size of the
smallest subgroup, to guarantee that a global update occurs whenever any
subgroup has all of its member users vote ``yes".

\begin{assm}
\label{assm:1}
	$L >
	\left(\frac{3}{\sqrt{2}} + \frac{\sqrt{32}}{\eps}\right)
	\sqrt{n\ln(12mT/\delta)}$.
\end{assm}

This brings us to our accuracy theorem, followed by a proof sketch 
(see
\ifsupp
Appendix~\ref{sec:app_b_acc}
\else
Supplement
\fi
for full details).

\begin{theorem}
\label{thm:acc}
	Given number of users $n$, number of subgroups $m$, smallest subgroup size
	$L$, number of rounds $R$, privacy parameter $\eps$, and chosen epoch length
	$\ell$ and number of epochs $T = R/\ell$, with probability at least
	$1-\delta$, in every epoch $t \in [T]$ such that fewer than
	\[
		\frac{\eps}{4} \cdot
		\min\left(\frac{L}{8\sqrt{2n\ln(12mT/\delta)}}
		- 1,\frac{1}{\sqrt{2}}\left[\log(T)\sqrt{\frac{n}{\ell}} - 1\right]\right)
	\]
	changes have occurred in epochs $1,2, \ldots, t$, \algo~outputs $\tilde p^t$
	such that
	\[
		|\tilde p^t - p^t| \leq
		4(\lfloor \log(T) \rfloor + 2)\sqrt{\frac{\ln(12nT/\delta)}{2\ell}}.
	\]
\end{theorem}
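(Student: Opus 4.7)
The proof splits into three pillars: (i) a high-probability concentration argument for local estimates and aggregated noisy responses, (ii) a correctness analysis of the voting protocol showing that global updates occur exactly when needed, and (iii) a privacy-cap bookkeeping argument tying the number of global updates to the number of distributional changes.

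First I would set up the ``good event.'' By Hoeffding applied to $\hat p_i^t = \tfrac{1}{\ell}\sum_{r \in E^t} x_i^r$ and a union bound over $i \in [n]$ and $t \in [T]$, with probability at least $1 - \delta/6$ every user's local estimate satisfies $|\hat p_i^t - \overline{\mu}_{\subpop{i}}^t| \leq \tfrac{1}{2}\thr{0}$, where $\overline{\mu}_{\subpop{i}}^t = \tfrac{1}{\ell}\sum_{r \in E^t} \mu_{\subpop{i}}^r$. Two further Chernoff applications, again union-bounded over epochs, control (a) the deviation of the aggregated noisy vote $\tfrac{1}{n}\sum_i a_i^t$ from its conditional expectation and (b) the deviation of the aggregated noisy estimate from $p^t$ during update epochs. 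Calibrating these failure probabilities to sum to at most $\delta$ gives one clean event on which the rest of the argument proceeds deterministically.

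Next I would prove a ``completeness'' and ``soundness'' pair for the voting step. For soundness: if on the good event no subgroup's drift exceeds $\thr{0}$, then no user votes truthfully, so $\tfrac{1}{n}\sum_i a_i^t$ concentrates at $1/(e^a+1)$ and stays below the algorithm's threshold, preventing a spurious update. For completeness: if some subgroup's drift exceeds $\thr{b^*}$, then within at most $2^{\lfloor\log T\rfloor - b^*}$ epochs the divisibility condition fires simultaneously for every user in that subgroup, producing at least $L$ truthful yes-votes (provided vote caps are not yet exhausted). Assumption~\ref{assm:1} on $L$ combined with the Chernoff bound then forces the aggregated vote above the threshold $\tfrac{1}{e^a+1} + \sqrt{\ln(10T/\delta)/(2n)}$, triggering a global update. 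This yields the ``timely update'' guarantee: between any two consecutive global updates, the drift in $p^t$ is bounded by $\thr{\lfloor\log T\rfloor} = 2(\lfloor\log T\rfloor+1)\sqrt{\ln(12nT/\delta)/(2\ell)}$.

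Finally I would bound the error $|\tilde p^t - p^t|$. Between updates the error is at most the timely-update drift bound plus a $\tfrac{1}{2}\thr{0}$ slack from Step 1, matching the claimed $4(\lfloor\log T\rfloor+2)\sqrt{\ln(12nT/\delta)/(2\ell)}$ up to constants. At update epochs the dominant contribution comes from randomized-response aggregation: by construction $\E{(\tilde p_i^t(e^b+1) - 1)/(e^b-1)} = \hat p_i^t$, so a Hoeffding bound on the debiased sum gives an aggregation error $O(\sqrt{\ln(T/\delta)/n}/(e^b-1))$, which the algorithm's choice of $b$ absorbs into the same $(\log T)\thr{0}$ budget. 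For the privacy-cap bookkeeping I would show that each distributional change triggers at most $O(1)$ truthful votes per user before the global estimate refreshes (the geometric divisibility ladder across the $\log T$ threshold levels telescopes) and exactly one truthful estimate send per global update. Hence if the number of change epochs is below $\tfrac{\eps}{4a}$ and $\tfrac{\eps}{4b}$ respectively---which is exactly the $\min(\cdot,\cdot)$ appearing in the theorem after substituting $a$ and $b$---then no cap $c_i^V$ or $c_i^E$ ever reaches $\eps/4$, and every preceding step's assumption that ``caps are not exhausted'' holds.

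The main obstacle is the cap bookkeeping in the last step: one must argue that across the $\lfloor\log T\rfloor + 1$ confidence levels, the geometric voting schedule ensures each user casts only $O(1)$ truthful votes per change rather than $O(\log T)$, so that the cap usage degrades with the number of \emph{changes} rather than with $T$. Handling the boundary behavior when $b^*$ ratchets up during the epochs following a change---so that a user may vote at increasing frequency until the global update arrives---requires the careful combinatorial observation that the total number of triggered epochs across levels $0,1,\dots,\lfloor\log T\rfloor$ within a single change-to-update interval is dominated by a geometric sum and is therefore constant in expectation on the good event.
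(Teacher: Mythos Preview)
Your overall architecture---good event via Hoeffding and union bounds, soundness/completeness of the voting step, error at update epochs via debiased randomized response, then cap bookkeeping---matches the paper's decomposition into Lemmas~\ref{lem:rr_vote_error}--\ref{lem:cap} quite closely. The first three pillars are essentially what the paper does.

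The gap is in the cap bookkeeping, and it is a real one. Your mental model has $b^*$ ``ratcheting up during the epochs following a change,'' with the total vote count controlled by a geometric sum across levels. This is not what happens. On the good event, between adjacent changes $\change{t_1},\change{t_2}$ the subgroup mean $\mu_j^t$ is \emph{constant}, so the quantity each user is tracking, $|\hat p_i^t - \hat p_i^{\last{t}}|$, stays within $\pm\thr{0}$ of the fixed number $|\mu_j^{t_1}-\mu_j^{\last{t}}|$. Hence $b^*$ does not drift upward; it fluctuates only within a window of width at most two levels around a value determined once and for all by that fixed gap. If a single user happened to sit at a high level for many epochs without a global update, your geometric-sum picture gives no bound at all.

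The mechanism that actually bounds the vote count is the synchronization idea you already used for completeness, pushed one step further. Because all users in a subgroup see the same underlying gap up to $\pm\thr{0}/2$, their $b^*$ values agree to within two levels. Therefore at the first epoch divisible by $2^{\lfloor\log T\rfloor-b_{\min}}$ (where $b_{\min}$ is the lowest level among them) \emph{every} user in the subgroup votes yes and a global update fires; any individual user can have voted only $O(1)$ times before this. The paper packages this as two statements: between adjacent changes there is at most one global update (Lemma~\ref{lem:update}, proved by noting that a second update would require some user's difference to exceed $\thr{0}$ while the true means are equal), and each user casts at most two truthful votes between adjacent changes (Lemmas~\ref{lem:adj_updates} and~\ref{lem:cap}). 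This yields cap increments of at most $2a$ and $b$ per change, which after substituting the algorithm's values of $a,b$ gives exactly the $\min(\cdot,\cdot)$ expression in the theorem. Replace the ``ratcheting/telescoping'' paragraph with this synchronization argument and the proof goes through.
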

\emph{Proof Sketch}: 
\ifsupp
We begin by proving correctness of the voting process. 
Lemma~\ref{lem:rr_vote_error} guarantees that if
every user decides that their subgroup distribution has not
changed then a global update does not occur, while
Lemma~\ref{lem:vote_yes} guarantees that if every user in some subgroup
decides that a change \emph{has} occurred, then a global update occurs.
By Lemma~\ref{lem:local_error}, for each user
$i$ the individual user estimates driving these voting decisions are
themselves accurate to within
$t_\ell = O(\sqrt{\ln(nT/\delta)/\ell})$ of the true
$\mu_{\subpop{i}}^t$. Finally, by Lemma~\ref{lem:estimate_error} guarantees that if every
user decides that a change has occurred, then a global update occurs that
produces a global estimate $\tilde p^t$ that is within $t_\ell$ of the true
$p^t$.

To reason about how distribution
changes across multiple epochs affect \algo, we use the preceding 
results to show that the number of global updates never exceeds the number of
distribution changes (Lemma~\ref{lem:update}). A more granular guarantee
then bounds the number of changes any user detects---and the number of times
they vote accordingly---as a function of the number of distribution changes
(Lemma~\ref{lem:adj_updates}). These results enable us, in
Lemma~\ref{lem:cap}, to show that each change increases a user's vote privacy
cap $c_i^V$ by at most 2 and estimate privacy cap $c_i^E$ by at most 	1.

Finally, recall that \algo~has each user $i$ compare their current
local estimate $\hat p_i^t$ to their local estimate in the last global update,
$\hat p_i^{\last{t}}$, to decide how to vote, with higher \emph{thresholds}
for 	$|\hat p_i^t - \hat p_i^{\last{t}}|$ increasing the likelihood of a ``yes" vote. This
implies that if every user in some subgroup computes a local estimate
$\hat p_i^t$ such that $|\hat p_i^t - \hat p_i^{\last{t}}|$
exceeds the 	highest 	threshold, then every user sends a ``yes" vote and a
global update occurs, bringing with it the accuracy guarantee of
Lemma~\ref{lem:estimate_error}. In turn, we conclude that $|\tilde p^t - p^t|$
never exceeds the highest threshold, and our accuracy result follows.
\else
We begin by proving correctness of the voting process. 
We show that (1) if every user decides that their subgroup distribution has not
changed then a global update does not occur, (2) if every user in some subgroup
decides that a change \emph{has} occurred, then a global update occurs, and
(3) for each user
$i$ the individual user estimates driving these voting decisions are
themselves accurate to within
$t_\ell = O(\sqrt{\ln(nT/\delta)/\ell})$ of the true
$\mu_{\subpop{i}}^t$. Finally, we prove that if every
user decides that a change has occurred, then a global update occurs that
produces a global estimate $\tilde p^t$ that is within $t_\ell$ of the true
$p^t$.

To reason about how distribution
changes across multiple epochs affect \algo, we use the preceding 
results to show that the number of global updates never exceeds the number of
distribution changes. A more granular guarantee
then bounds the number of changes any user detects---and the number of times
they vote accordingly---as a function of the number of distribution changes.
These results enable us to show that each change increases a user's vote privacy
cap $c_i^V$ by at most 2 and estimate privacy cap $c_i^E$ by at most 	1.

Finally, recall that \algo~has each user $i$ compare their current
local estimate $\hat p_i^t$ to their local estimate in the last global update,
$\hat p_i^{\last{t}}$, to decide how to vote, with higher \emph{thresholds}
for 	$|\hat p_i^t - \hat p_i^{\last{t}}|$ increasing the likelihood of a ``yes" vote. This
implies that if every user in some subgroup computes a local estimate
$\hat p_i^t$ such that $|\hat p_i^t - \hat p_i^{\last{t}}|$
exceeds the 	highest 	threshold, then every user sends a ``yes" vote and a
global update occurs, bringing with it the global accuracy guarantee proven
above. In turn, we conclude that $|\tilde p^t - p^t|$
never exceeds the highest threshold, and our accuracy result follows.
\fi $\hfill \Box$

We conclude this section with a few remarks about \algo. First, while the
provided guarantee depends on the number of changes of any size, one can easily
modify \algo~to be robust to changes of size $\leq c$, paying and additive $c$
term in the accuracy. Second, the
accuracy's dependence on $\ell$ offers guidance for its selection: roughly, for
desired accuracy $\alpha$, one should set $\ell = 1/\alpha^2$. Finally, in
practice one may want to periodically assess how many users have exhausted
their privacy budgets, which we can achieve by extending the voting protocol to
estimate the fraction of ``live'' users. We primarily view this as an 
implementation detail outside of the scope of the exact problem we study.

\section{An Application to Heavy Hitters}
\label{sec:hh}
We now use the methods developed above to obtain similar
guarantees for a common problem in local differential privacy known as
\emph{heavy hitters}. In this problem each of $n$ users has their own
dictionary value $v \in \cD$ (e.g. their homepage), and an aggregator wants to
learn the most frequently held dictionary values (e.g. the most common
homepages), known as ``heavy hitters", while satisfying local differential
privacy for each user. The heavy hitters problem has attracted significant
attention~\citep{MS06, HKR12, BST17, BNS17}. Here, we show how our techniques
combine with an approach of~\citet{BS15} to obtain the first guarantees for
heavy hitters on evolving data. We note that our focus on this approach is
primarily for expositional clarity; our techniques should apply just as well to
other variants, which can lead to more efficient algorithms.

\subsection{Setting Overview}
As in the simpler Bernoulli case, we divide time into $\ell \cdot T$ rounds and
$T$ epochs. Here, in each round $r$ each user $i$ draws a sample
$v_i^r$ from a subgroup-specific distribution $\cP_{\subpop{i}}^r$ over the $d$
values in dictionary $\cD$ and track $\cP^1, \ldots, \cP^T$, the weighted
average dictionary distribution in each epoch. We will require the same
Assumption~\ref{assm:1} as in the Bernoulli case, and we also suppose that
$d \gg n$, a common parameter regime for this problem.

In the Bernoulli case users could reason about the evolution of $\mu_j^t$ directly
from their own $\ell$ samples in each epoch. Since it is reasonable to assume
$d \gg \ell$, this is no longer possible in our new setting---$\cP_j^t$ is too
large an object to estimate from $\ell$ samples. However, we can instead
adopt a common approach in heavy hitters estimation and examine a ``smaller"
object using a \emph{hash} on dictionary samples. We will therefore have users
reason about the distribution $p_j^t$ over hashes that $\cP_j^t$ induces, which is
a much smaller joint distribution of $m$ (transformed) Bernoulli distributions.
Our hope is that users can reliably ``detect changes'' by analyzing $p_j^t$,
and the feasibility of this method leans crucially on the properties of the hash
in question.

\subsection{Details and Privacy Guarantee}
\label{sec:hh_privacy}
First we recall the details of the one-shot protocol from~\citet{BS15}. In
their protocol, each user starts with a dictionary value $v \in [d]$ with an
associated basis vector $e_v \in \mathbb{R}^d$. The user hashes this to a
smaller vector $h \in \mathbb{R}^w$ using a (population-wide) $\Phi$, a $w \times d$
Johnson-Lindenstrauss matrix where $w \ll d$. The user then passes this
hash $\hat z_i^t = \Phi e_v$ to their own local randomizer $\cR$, and the
center aggregates these randomized values into a single $\bar z$ which induces
a frequency oracle.

We will modify this to produce a protocol \hhalgo~in the vein of \algo. In each
epoch $t$ each user $i$ computes an estimated histogram $\hat p_i^t$ and then
hashes it into $\Phi \hat p_i^t \in \mathbb{R}^w$, where $w = 20n$ (we assume
the existence of a subroutine \text{GenProj} for generating $\Phi$). Each user
votes on whether or not a  global update has occurred by comparing $\Phi \hat p_i^t$ to their
estimate during the most recent update, $\Phi \hat p_i^{\last{t}}$, in \hhvote. Next,
\hhalgo~aggregates these votes to determine whether or not a global update will
occur. Depending on the result, each user then calls their own estimation
subroutine \hhest~and outputs a randomized response using $\cR$ accordingly.
If a global update occurs, \hhalgo~aggregates these responses into a new
published global hash $\tilde y^t$; if not, \hhalgo~publishes $\tilde y^{t-1}$.
In either case, \hhalgo~publishes $(\Phi, \tilde y^t)$ as
well. This final output is a \emph{frequency oracle}, which for any $v \in [d]$
offers an estimate $\langle \Phi e_v, \tilde y^t\rangle$ of $\cP^t(v)$.

\hhalgo~will use the following thresholds with
$\thr{b} = 2(b + 1)\sqrt{2\ln(16wnT/\delta)/w\ell}$ for
$b = -1, 0, \ldots, \lfloor \log(T) \rfloor$. See
Section~\ref{sec:hh_accuracy} for details on this choice. Fortunately, the bulk
of our analysis uses tools already developed either in
Section~\ref{sec:bernoulli} or ~\citet{BS15}. Our privacy guarantee is almost
immediate: since \hhalgo~shares its voting protocols with \algo, the only
additional analysis needed is for the estimation randomizer $\cR$
\ifsupp
(Lemma~\ref{lem:hh_priv_lemma}).
\else
(see Supplement).
\fi
Using the privacy of $\cR$, privacy for \hhalgo~follows by the same
proof as for	the Bernoulli case.

\begin{theorem}
\label{thm:hh_priv}
	\hhalgo~is $\epsilon$-local differentially private.
\end{theorem}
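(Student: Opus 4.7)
The plan is to mirror the privacy proof of Theorem~\ref{thm:privacy}. \hhalgo~interacts with each user $i$'s private data only through two subroutines: \hhvote~(casting a randomized ``update-needed'' vote) and \hhest~(submitting a randomized sketch of the local histogram). As in the Bernoulli case, these calls are gated by privacy counters $c_i^V < \eps/4$ and $c_i^E < \eps/4$ that limit how many truthful messages each user ever transmits; all other messages are drawn from distributions independent of the user's data. The task is to verify that the per-message privacy costs still compose to at most $\eps$.

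First, I would dispose of the voting side. \hhvote~uses exactly the same randomized-response mechanism as \vote~--- it compares the change in the hashed local estimate $\Phi \hat p_i^t$ against the thresholds $\thr{b}$ to pick a confidence level $b^*$ and then outputs a Bernoulli vote with bias $e^a/(e^a+1)$ or $1/(e^a+1)$ --- and its cap $c_i^V$ is updated identically. Standard randomized response gives $a$-differential privacy per truthful vote, and the cap argument from the proof of Theorem~\ref{thm:privacy} shows that the vote-transcript of user $i$ is $\eps/2$-differentially private in the underlying data (absorbing the ``one extra truthful vote past the cap'' slack into the choice of $a$, exactly as in the Bernoulli case).

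Second, I would handle the estimation randomizer $\cR$, which is the only genuinely new ingredient. The key observation is that \hhest's output is a post-processing of $\cR$ applied to the hashed local histogram $\Phi \hat p_i^t$. Since $\hat p_i^t$ depends only on user $i$'s samples during epoch $t$, and the projection $\Phi$ is public and data-independent, the $b$-differential privacy of $\cR$ on arbitrary inputs --- which is the main privacy guarantee of the one-shot protocol of \citet{BS15} --- transfers to our setting without modification, so each truthful invocation of \hhest~costs at most $b$ privacy. The cap $c_i^E < \eps/4$ then bounds the estimate-transcript contribution by $\eps/2$. The main (minor) obstacle is restating the $b$-DP guarantee of $\cR$ in our interactive language; this is the content of the lemma sketched in the paragraph preceding the theorem.

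Finally, I would combine the two bounds. With $a$ and $b$ chosen as specified so that the vote and estimate transcripts are each $\eps/2$-DP, basic composition gives that the full transcript $\trans_i(x_i)$ is $\eps$-differentially private for every user $i$, every pair of neighboring inputs, and every (possibly adversarial) choice of $A_{-i}', C'$. Interactivity poses no additional difficulty: the GlobalUpdate decisions and the aggregate release $\tilde y^t$ are deterministic functions of other users' already-privatized messages, so from user $i$'s perspective they are post-processing and do not inflate privacy loss. Applying Definition~\ref{def:localdp} yields the claim.
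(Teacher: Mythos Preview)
Your proposal is correct and matches the paper's own argument essentially verbatim: the paper states that ``using the privacy of $\cR$, privacy for \hhalgo~follows by the same proof as for the Bernoulli case,'' and your sketch spells out exactly that reduction, handling the unchanged voting side via Theorem~\ref{thm:privacy} and the new estimation side via the $b$-differential privacy of $\cR$ (the lemma you reference is Lemma~\ref{lem:hh_priv_lemma} in the appendix). The only cosmetic difference is that the paper proves the privacy of $\cR$ directly rather than citing \citet{BS15}, but this does not affect the structure of the argument.
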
 

\subsection{Accuracy Guarantee}
\label{sec:hh_accuracy}
As above, an accuracy guarantee for \hhalgo~unfolds along similar lines as 
that for \algo, with additional recourse to results from~\citet{BS15}. We again
require Assumption~\ref{assm:1} and also assume $d = 2^{o(n^2/\ell)}$
(a weak assumption made primarily for neatness in
Theorem~\ref{thm:mainintrohh}).  Our result and its proof sketch follow, with
details and full pseudocode in 
\ifsupp
Appendix Section~\ref{sec:app_hh_acc}.
\else
the Supplement.
\fi

\begin{theorem}
\label{thm:hh_acc}
	With probability at least $1-\delta$, in every epoch $t \in [T]$ such that
	fewer than
	\[
		\frac{\eps}{4} \cdot
		\min\left(\frac{L}{8\sqrt{2n\ln(12mT/\delta)}}
		- 1,
		\frac{\log(T)\sqrt{\frac{n\ln(320n^2T/\delta)}{10\ell}}
		- \sqrt{\frac{\ln(16dT/\delta)}{10}}
		- 2\ln(320nT/\delta)\sqrt{\frac{5}{n}}}
		{\sqrt{\ln(320nT/\delta)}
		\left(1 + \frac{20}{\sqrt{n}}\right)}\right)
	\]
	changes have occurred in epochs $1,2, \ldots, t$,
	\[
		|\hat f^t(v) - \cP^t(v)| <
		4(\log(T) + 2)\sqrt{\frac{2\ln(320n^2T/\delta)}{\ell}} 
		+ \sqrt{\frac{\ln(\tfrac{16ndT}{\delta})}{n}}.
	\]
\end{theorem}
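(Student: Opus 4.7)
The plan is to adapt the proof structure of Theorem~\ref{thm:acc} to the hashed setting, substituting Johnson--Lindenstrauss concentration and the Bassily--Smith randomizer analysis wherever the Bernoulli proof used scalar concentration and bit-level randomized response. The voting-protocol machinery, its correctness under the ``all users detect / no user detects'' dichotomy, and the privacy-budget accounting (bounding vote-cap growth by $2$ and estimate-cap growth by $1$ per distribution change) transfer essentially without modification, because these arguments depend only on the voting mechanism and not on the semantic content of a vote. This is what yields the change-count bound once we impose $c_i^V, c_i^E \le \eps/4$.

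The two genuinely new ingredients are local accuracy in the hashed space and the accuracy of the aggregated frequency oracle. For the former, I would combine a coordinatewise Chernoff bound for the empirical histogram $\hat p_i^t$ formed from $\ell$ samples with the Johnson--Lindenstrauss property of $\Phi$, to show that for every user $i$ and every $v \in [d]$ the quantity $|\langle \Phi e_v, \Phi \hat p_i^t\rangle - \cP_{\subpop{i}}^t(v)|$ is at most $O(\sqrt{\ln(wnT/\delta)/w\ell})$, matching the threshold scale $\thr{b}$. This justifies the Bernoulli-style voting lemmas in the new setting: if no user's hash deviates past any threshold, no global update fires, and if every user in some subgroup sees their hash exceed the highest threshold, a global update fires.

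For the aggregated frequency oracle, when a global update occurs \hhalgo~aggregates randomized responses $\cR(\Phi \hat p_i^t)$ into $\tilde y^t$, and I would invoke the Bassily--Smith analysis of $\cR$ together with a union bound over $v \in [d]$ and $t \in [T]$ to show that $\langle \Phi e_v, \tilde y^t\rangle$ is within $O(\sqrt{\ln(ndT/\delta)/n})$ of $\cP^{\last{t}}(v)$, absorbing the JL distortion on the aggregate mean. The final bound then follows from the triangle inequality
\[
|\hat f^t(v) - \cP^t(v)| \le |\langle \Phi e_v, \tilde y^t\rangle - \cP^{\last{t}}(v)| + |\cP^{\last{t}}(v) - \cP^t(v)|,
\]
where the first summand gives the $\sqrt{\ln(ndT/\delta)/n}$ term and the second is at most the highest threshold $\thr{\lfloor \log T\rfloor}$ (otherwise the voting protocol would already have triggered an update), yielding the $\log(T)\sqrt{\ln(n^2T/\delta)/\ell}$ term.

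The main obstacle I anticipate is calibrating the threshold gaps so that JL distortion plus local sampling error fit strictly inside $\thr{b+1} - \thr{b}$, which is what ensures that genuine changes at level $b^*$ trigger exactly the desired voting behavior while non-changes do not. The somewhat ornate change-count expression in the theorem statement, including the $\sqrt{\ln(16dT/\delta)/10}$ subtraction and the $\sqrt{\ln(320nT/\delta)}(1 + 20/\sqrt{n})$ denominator, arises directly from propagating the hash dimension $w = 20n$ and the noise parameters of $\cR$ through this threshold-gap calculation; the mild assumption $d = 2^{o(n^2/\ell)}$ then lets the final bound be stated cleanly without the JL dimension appearing explicitly.
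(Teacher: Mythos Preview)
Your overall architecture is right, and you correctly identify that the privacy-budget accounting (Lemmas~\ref{lem:update}, \ref{lem:adj_updates}, \ref{lem:cap}) transfers unchanged. But there is a scale confusion that breaks two of your key steps as stated.

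The voting protocol in \hhvote\ compares $\|\Phi \hat p_i^t - \Phi \hat p_i^{\last{t}}\|_\infty$ to the thresholds $\thr{b} = O(\sqrt{\ln(wnT/\delta)/(w\ell)})$. To carry over the voting-correctness lemmas, the local accuracy statement you actually need is a hash-space $\ell_\infty$ bound,
\[
\|\Phi \hat p_i^t - \Phi p_{\subpop{i}}^t\|_\infty = O\!\left(\sqrt{\ln(wnT/\delta)/(w\ell)}\right),
\]
which the paper proves (Lemma~\ref{lem:hh_local_error}) by a direct coordinatewise Chernoff bound on $\Phi \hat p_i^t$---no JL needed here, since each coordinate is an average of $\ell$ independent $\pm 1/\sqrt{w}$ terms. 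Your proposed local accuracy instead bounds $|\langle \Phi e_v, \Phi \hat p_i^t\rangle - \cP_{\subpop{i}}^t(v)|$. This is a distribution-space quantity; once you unpack it via $\|\Phi e_v\|_1 = \sqrt{w}$ and the JL distortion, it lives at scale $O(1/\sqrt{\ell}) + O(1/\sqrt{n})$, a factor $\sqrt{w}$ too large to ``match the threshold scale.'' Moreover, it is simply not the quantity that \hhvote\ tests, so even a correct bound on it would not drive the analogues of Lemmas~\ref{lem:update} and~\ref{lem:adj_updates}.

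The same confusion recurs in your final triangle inequality. When no update has fired since $\last{t}$, the voting argument only tells you that $\|\Phi p^t - \Phi p^{\last{t}}\|_\infty$ is below the top threshold; it does not directly bound $|\cP^t(v) - \cP^{\last{t}}(v)|$, so the second summand is not ``at most $\thr{\lfloor \log T\rfloor}$.'' The paper's fix is to stay entirely in hash space---combining Lemma~\ref{lem:hh_estimate_error} with the voting argument to bound $\|\tilde y^t - \Phi p^t\|_\infty$---and then apply a single translation step (Lemma~\ref{lem:translation}) at the very end, which absorbs the $\sqrt{w}$ blow-up and produces the additive $\sqrt{\ln(ndT/\delta)/n}$ JL term in one shot. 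If you reorganize along these lines, your plan goes through with the stated constants.
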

\emph{Proof Sketch}: Our proof is similar to that of Theorem~\ref{thm:acc} and proceeds by
proving analogous versions of the same lemmas, with users checking for changes
in the subgroup distribution over observed \emph{hashes} rather than observed
\emph{bits}. This leads to one new wrinkle in our argument: once we show that
the globally estimated hash is close to the true hash, we must translate from
closeness of hashes to closeness of the distributions they induce
\ifsupp
(Lemma~\ref{lem:translation})
\fi
. The rest of the proof, which uses guarantees of
user estimate accuracy to 1. guarantee that sufficiently large changes cause
global updates and 2. each change incurs a bounded privacy loss, largely
follows that of Theorem~\ref{thm:acc}. $\hfill \Box$

\bibliographystyle{plainnat}
\bibliography{persistent_local}

\section{Missing Proofs from Section~\ref{sec:bernoulli}}
\label{sec:app_b_pri}
\begin{theorem}
The protocol \algo~satisfies $\eps$-local differential privacy (Definition~\ref{def:localdp})
\end{theorem}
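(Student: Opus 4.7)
The plan is to fix a target user $i$, neighboring inputs $x_i, x_i'$ for that user, and an arbitrary adversarial choice of other parties $A_{-i}'$ and $C'$; I then need to show that the distribution of $\trans_i(x_i)$ differs from that of $\trans_i(x_i')$ by at most a factor of $e^\eps$ on every event. User $i$'s contribution to the transcript in epoch $t$ consists of a vote $a_i^t$ from \vote~and an estimate $\tilde p_i^t$ from \est. In both subroutines, the output is drawn from one of two Bernoulli distributions whose biases differ multiplicatively by exactly $e^a$ (for votes) or $e^b$ (for estimates): a ``truthful'' randomized-response distribution determined by the user's data when the data-dependent branch (VoteYes or SendEstimate) fires, and a fixed ``decoy'' distribution that is data-independent otherwise. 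The standard randomized response calculation then shows that each truthful round contributes a pointwise privacy loss of at most $a$ or $b$ under the swap $x_i \leftrightarrow x_i'$, and decoy rounds contribute zero.

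Next I would use the privacy caps $c_i^V$ and $c_i^E$ to bound the number of truthful rounds. By construction, \vote~only enters the truthful branch while $c_i^V < \eps/4$, and each such entry increments $c_i^V$ by $a$; this enforces an \emph{a priori} bound of roughly $\eps/(4a)$ truthful votes, independent of $x_i$ and of the randomness of other parties. A symmetric bound of roughly $\eps/(4b)$ holds for \est. Crucially, these bounds hold path-by-path for every realization of the protocol, because the counters are deterministic, monotone functions of the user's own prior truthful responses.

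I would then apply basic adaptive composition to the length-$2T$ transcript, viewed as a sequence of Bernoulli draws each of which is either data-independent (zero cost) or $a$- or $b$-differentially private (cost at most $a$ or $b$). Summing across the capped number of truthful rounds yields
\[
  \Bigl\lceil\tfrac{\eps}{4a}\Bigr\rceil\cdot a \;+\; \Bigl\lceil\tfrac{\eps}{4b}\Bigr\rceil\cdot b \;\le\; \eps,
\]
once one verifies, using Assumption~\ref{assm:1} together with the definitions of $a$ and $b$ set in \algo, that $a$ and $b$ are small enough for the integer-rounding overhead to be absorbed into $\eps/2$. The main obstacle is setting up the adaptive composition correctly: both the counter trajectory and the threshold checks depend on $x_i$ \emph{and} on public state generated by the potentially adversarial other parties, so the composition has to be invoked conditioning on the full public history and arguing the per-round privacy loss pointwise in that history, rather than treating the rounds as a fixed non-adaptive product. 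Once that conditioning is in place, the remaining argument is bookkeeping against the specific settings of $a$, $b$, and the cap $\eps/4$.
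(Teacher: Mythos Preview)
Your approach is essentially the paper's: decompose the transcript epoch by epoch, observe that each vote (resp.\ estimate) is drawn from one of two Bernoulli distributions whose densities differ pointwise by at most $e^a$ (resp.\ $e^b$), and use the counters $c_i^V, c_i^E$ to cap how many epochs can contribute nontrivially to the likelihood ratio. The paper carries this out by directly bounding the product of conditional likelihood ratios rather than by invoking a black-box composition theorem, but the substance is the same.

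There is, however, one genuine gap in your accounting. You bound the number of ``truthful'' vote epochs by $\lceil \eps/(4a)\rceil$, but this is a bound on the number of epochs in which VoteYes fires \emph{for a single fixed input}. Whether VoteYes fires in epoch $t$ depends on $\hat p_i^t$, $\hat p_i^{\last{t}}$, and on $c_i^V$, all of which are functions of $x_i$; hence, even after conditioning on the full public history $z^{<t}$, the set of truthful epochs for $x_i$ need not coincide with the set for $x_i'$. The epochs that actually contribute to the likelihood ratio are those in which VoteYes takes different values under the two inputs, and that set can be as large as the union of the two truthful sets, i.e.\ up to $2\lceil \eps/(4a)\rceil$. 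The paper makes this explicit by defining $S_V$ as the set of epochs where VoteYes is True in \emph{at least one} of the two executions and bounding $|S_V|\le \eps/(2a)$. This factor of two is precisely why the caps sit at $\eps/4$ rather than $\eps/2$: with the correct count the vote contribution is at most $\eps/2$ and the estimate contribution is at most $\eps/2$, summing to $\eps$. Your displayed sum $\lceil \eps/(4a)\rceil\cdot a + \lceil \eps/(4b)\rceil\cdot b$ is therefore not an upper bound on the privacy loss; the correct bound is roughly twice as large, and it is that doubled quantity that must be shown to be at most $\eps$.

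A minor point: you should not need Assumption~\ref{assm:1} here. The paper's privacy guarantee is unconditional on the data distribution, and the rounding slack (which the paper also treats somewhat loosely) does not require it.
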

\begin{proof}
	To begin, we fix an arbitrary private user $i$ and arbitrary algorithms
	$A_{-i}', C'$ for the other users and for the center.  Fix any pair of inputs
	$x_i, x_i'$ for user $i$.  To ease notation, let
	$\trans = \trans_i(A_i(x_i), A_{-i}', C')$ and
	$\trans' = \trans_i(A_i(x_i'), A_{-i}', C')$ be the random variables
	corresponding to the messages sent by user $i$ in the protocol with inputs
	$x_i, x_i'$, respectively. Note that we drop the subscript $i$, since user
	$i$ will be fixed throughout.  To prove the theorem, it suffices to show
	\[
		\frac{\P{\trans = z}}{\P{\trans' = z}} \leq e^{\varepsilon}
	\]
	for every possible set of messages $z$.
	
	The structure of the transcripts is as follows: each epoch $t$ contributes two
	elements, first the vote $a^t$ (the output of $\vote(i,t)$) and the estimate
	$\tilde{p}^{t}$ (the output of $\est(i,t)$).  So we can write
	$z = ((a^1,\tilde{p}^1),\dots,(a^T,\tilde{p}^{T}))$ and
	\begin{align*}
	\label{eq:prod}
		\frac{\P{\trans = z}}{\P{\trans' = z}} 
		={} &\prod_{t=1}^{T}
		\frac{\P{\trans^{t} = (a^t, \tilde{p}^{t}) \mid \trans^{< t} = z^{< t}}}
		{\P{\trans'^{t} = (a^t, \tilde{p}^{t}) \mid \trans'^{< t} = z^{< t}}}.
	\end{align*}

	For any execution of the protocol, we can partition the set of epochs into
	those epochs $S_V \subseteq [T]$ where in at least one of $\trans$ and
	$\trans'$ user $i$ sets VoteYes$_i^t$ to True,
	and those $S_V^c$ where VoteYes$_i^t$ is False in both $\trans$ and $\trans'$;
	similarly, we can partition $[T]$ into those epochs $S_E$ where
	SendEstimate$_i^t$ is True in at least one of $\trans$ and $\trans'$ and
	those $S_E^c$ where SendEstimate$_i^t$ is False in both $\trans$ and
	$\trans'$.
	
	Since every epoch in $S_{V}$ causes the counter $c_i^v$ to increase by $a$,
	$S_{V}$ contains at most $\eps/4a$ epochs from each of $\trans$ and $\trans'$,
	so $|S_V| \leq \eps/2a$.
	
	For any $t \in S^{c}_{V}$, user $i$ will sample $a^{t}$ and $\tilde{p}^{t}$
	from $\Ber(\frac{1}{e^a + 1})$ in both $\trans$ and $\trans'$.  Thus
	$$
		\prod_{t \in S^{c}_{V}} \frac{\P{\trans^{t} = (a^t, \tilde{p}^{t}) \mid \trans^{< t} = z^{< t}}}
		{\P{\trans'^{t} = (a^t, \tilde{p}^{t}) \mid \trans'^{< t} = z^{< t}}} = 1.
	$$

	To complete the proof, we need to bound
	$$
		\prod_{t \in S_{V}} \frac{\P{\trans^{t} = (a^t, \tilde{p}^{t}) \mid \trans^{< t} = z^{< t}}}
		{\P{\trans'^{t} = (a^t, \tilde{p}^{t}) \mid \trans'^{< t} = z^{< t}}} \leq e^{\eps/2},
	$$
	which will hold because every factor in the product is at most $e^a$ and
	$|S_{V}| \leq \eps/2a$. To see why, consider some epoch $t \in S_{V}$.  The
	first component of $\trans^t$ is the vote $a^t \in \{0,1\}$.  The only two
	possibilities for how $a^t$ is chosen are
	$a^t \sim \Ber(\frac{1}{e^a + 1})$ or $a^t \sim \Ber(\frac{e^a}{e^a+1})$. 
	One can easily verify that for any $a^t \in \{0,1\}$,
	$$
		\frac{\P{a^t \mid \trans^{< t} = z^{< t}}}{\P{a^t \mid \trans'^{< t} = z^{< t}}} \leq e^{a}.
	$$

	We now consider the second component of $\trans^t$, which is $\tilde{p}^{t}$.
	As in the $S_V$ case, since every epoch in $S_E$ causes the counter $c_i^E$ to
	increase by $b$, $S_E$ contains at most $\eps/4b$ epochs from each of $\trans$
	and $\trans'$, so $|S_E| \leq \eps/2b$.
	
	When SendEstimate$^t$ is False, then $\tilde{p}^{t}$ is sampled from
	$$
		\Ber\left(\frac{1}{e^b + 1}\right)
	$$
	and when SendEstimate$^t$ is True, then $\tilde{p}^{t}$ is sampled from
	$$
		\Ber\left(\frac{1 + \hat p^t(e^b-1)}{e^b + 1}\right)
	$$
	depending on the value of the private data $\hat{p}^{t}$, which lies in $[0,1]$. 
	Thus, the parameter in the Bernoulli distribution lies in
	$[\frac{1}{e^b+1}, \frac{e^b}{e^b+1}]$.  Again, one can easily verify that for
	any $\tilde{p}^{t} \in \{0,1\}$,
	$$
		\frac{\P{\tilde{p}^{t} \mid \trans^{< t} = z^{< t}, a^{t}}}
		{\P{\tilde{p}^{t} \mid \trans'^{< t} = z^{< t}, a^{t}}} \leq e^{b}.
	$$
	
	Putting it together, we have
	\begin{align*}
		\frac{\P{\trans = z}}{\P{\trans' = z}}
		={} &\prod_{t=1}^{T} \frac{\P{\trans^{t} = (a^t, \tilde{p}^{t}) \mid \trans^{< t} = z^{< t}}}
		{\P{\trans'^{t} = (a^t, \tilde{p}^{t}) \mid \trans'^{< t} = z^{< t}}} \\
		={} &\prod_{t \in S_{V}} \frac{\P{\trans^{t} = a^t \mid \trans^{< t} = z^{< t}}}
		{\P{\trans'^{t} = a^t \mid \trans'^{< t} = z^{< t}}}
		\cdot
		\prod_{t \in S_{E}} \frac{\P{\trans^{t} = \tilde{p}^{t} \mid \trans^{< t} = z^{< t}, a^t}}
		{\P{\trans'^{t} = \tilde{p}^{t}) \mid \trans'^{< t} = z^{< t}, a^t}} \\
		\leq{} &e^{a \cdot |S_V|} \cdot e^{b \cdot |S_E|}
		\leq e^{\eps/2} \cdot e^{\eps/2} \leq  = e^{\eps}.
	\end{align*}
	
	This completes the proof.
\end{proof} 

\section{Missing Proofs From Section~\ref{subsec:accuracy}}
\label{sec:app_b_acc}
We begin the proof of our accuracy guarantee with a series of lemmas. Recalling
that we set
\[
	a = \frac{4\sqrt{2n\ln(12mT/\delta)}}
	{L-\frac{3}{\sqrt{2}}\sqrt{n\ln(12mT/\delta)}}
\]
and
\[
		b = \frac{\sqrt{2\ln(12T/\delta)/2n}}
		{\log(T)\sqrt{\ln(12nT/\delta)/2\ell} - \sqrt{\ln(12T/\delta)/2n}}
\]
we start by showing that if every user votes that a change has not occurred,
then a global update will not occur.

\begin{lemma}
\label{lem:rr_vote_error}
	With probability at least $1-\frac{\delta}{6}$, in every epoch $t \in [T]$,
	if every user $i$ sets VoteYes$_i^t$ $\gets$ False then
	GlobalUpdate$^t \gets$ False.
\end{lemma}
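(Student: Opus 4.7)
The plan is a direct Hoeffding-plus-union-bound argument. Fix an arbitrary epoch $t$ and condition on the event $A_t$ that every user $i$ sets VoteYes$_i^t$ to False. Inspection of \vote{} shows that in this branch each user samples $a_i^t \sim \Ber(\tfrac{1}{e^a+1})$ using only fresh private randomness, without consulting either their private data or any other user's messages. Consequently, conditional on $A_t$ together with the transcript of all earlier epochs, the $n$ votes $\{a_i^t\}_{i=1}^n$ are mutually independent $\Ber(\tfrac{1}{e^a+1})$ draws with common mean $\tfrac{1}{e^a+1}$.

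Applying Hoeffding's inequality to these $n$ iid $[0,1]$-valued variables gives
\[
\P{\tfrac{1}{n}\textstyle\sum_{i=1}^n a_i^t > \tfrac{1}{e^a+1} + \sqrt{\tfrac{\ln(10T/\delta)}{2n}} \,\Big|\, A_t,\text{history}} \le \exp\!\left(-2n \cdot \tfrac{\ln(10T/\delta)}{2n}\right) = \tfrac{\delta}{10T}.
\]
By the definition of GlobalUpdate$^t$ in the main \algo{} loop, the event inside this probability is exactly the event that a global update is triggered in epoch $t$. Taking expectation over the earlier transcript therefore yields $\P{A_t \wedge \text{GlobalUpdate}^t} \le \delta/(10T)$.

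Finally, union-bounding over $t \in [T]$, the total probability that some epoch sees a ``spurious'' global update (triggered even though every user voted False) is at most $T \cdot \delta/(10T) = \delta/10 \le \delta/6$, which yields the lemma. I do not expect any real obstacle: the only delicate point is verifying the conditional-independence claim, but this is immediate from the fact that the False branch of \vote{} samples $a_i^t$ from a fixed distribution with no dependence on private data or on other users' randomness. Everything else is a textbook Hoeffding calculation with the threshold from the algorithm already tuned to give the desired tail bound.
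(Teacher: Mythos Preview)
Your proposal is correct and follows essentially the same approach as the paper: observe that under $A_t$ the votes are iid $\Ber(\tfrac{1}{e^a+1})$, apply a Hoeffding/Chernoff bound to the sample mean, and union-bound over the $T$ epochs. Your version is actually a bit more careful than the paper's---you explicitly justify the conditional independence and use the one-sided tail with the algorithm's exact threshold $\sqrt{\ln(10T/\delta)/(2n)}$, whereas the paper applies a two-sided bound (and in fact writes the constant as $12$ rather than $10$ in its proof).
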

\begin{proof}
	Since every user $i$ sets VoteYes$_i^t \gets$ False, every $a_i^t$ is an iid
	draw from a Bern$\left(\frac{1}{e^a + 1}\right)$ distribution.
	Thus a Chernoff bound says
	\[
		\P{\left|\frac{1}{n} \sum_{i=1}^n a_i^t - \frac{1}{e^a + 1}\right|
		\geq \sqrt{\frac{\ln(12T/\delta)}{2n}}} \leq \frac{\delta}{6T}.
	\]
	Since GlobalUpdate$^t \gets \left(\frac{1}{n}\sum_{i=1}^n a_i^t >
	\frac{1}{e^a + 1} + \sqrt{\frac{\ln(12T/\delta)}{2n}}\right)$,
	GlobalUpdate$^t \gets$ False. Union-bounding across $T$ epochs completes the
	proof.
\end{proof}

Similarly, we also want to ensure that if every user in some subgroup
votes that an update \emph{has} occurred then a global update will indeed
occur.

\begin{lemma}
\label{lem:vote_yes}
	With probability at least $1-\frac{\delta}{3}$, in every epoch $t \in [T]$,
	if there is a subgroup $j$ where every user $i \in S_j$ sets
	VoteYes$_i^t \gets$ True then GlobalUpdate$^t \gets$ True.		
\end{lemma}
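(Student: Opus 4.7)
The idea is to show that in the worst case, where only $S_j$'s members vote ``yes'' and everyone else votes ``no,'' the aggregate $\frac{1}{n}\sum_i a_i^t$ still exceeds the GlobalUpdate trigger. First I would fix an epoch $t$ and a subgroup $j$ with $|S_j| \geq L$ in which every user sets VoteYes$_i^t = $ True. For $i \in S_j$ we have $a_i^t \sim \Ber(e^a/(e^a+1))$; for $i \notin S_j$, $a_i^t \sim \Ber(e^a/(e^a+1))$ or $\Ber(1/(e^a+1))$, depending on that user's own vote. In either case $\mathbb{E}[a_i^t] \geq 1/(e^a+1)$, with the smaller value attained only outside $S_j$, so
\[
    \mathbb{E}\!\left[\frac{1}{n}\sum_{i=1}^n a_i^t\right]
    \;\geq\; \frac{1}{e^a+1} + \frac{L}{n}\cdot\frac{e^a-1}{e^a+1}.
\]

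Next I would invoke Hoeffding on $\frac{1}{n}\sum_i a_i^t$ (an average of $n$ independent $\{0,1\}$ variables) to obtain, with probability at least $1 - \delta/(3mT)$, that $\frac{1}{n}\sum_i a_i^t$ lies within $\sqrt{\ln(6mT/\delta)/2n}$ of its expectation. Chaining this with the preceding display, it suffices to verify that the algorithm's choice of $a$ satisfies
\[
    \frac{L}{n}\cdot\frac{e^a-1}{e^a+1}
    \;\geq\; \sqrt{\frac{\ln(6mT/\delta)}{2n}} + \sqrt{\frac{\ln(10T/\delta)}{2n}},
\]
since then the Hoeffding lower bound on $\frac{1}{n}\sum_i a_i^t$ strictly exceeds the GlobalUpdate trigger $\frac{1}{e^a+1} + \sqrt{\ln(10T/\delta)/2n}$ and a global update is forced. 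A union bound over $t \in [T]$ and $j \in [m]$ then gives the advertised $1-\delta/3$ confidence.

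The main obstacle is the algebraic verification of that last inequality. I would control the awkward ratio $(e^a-1)/(e^a+1) = \tanh(a/2)$ with the elementary bound $\tanh(a/2) \geq a/4$, valid as long as $a$ is bounded; Assumption~\ref{assm:1} together with the closed form $a = 4\sqrt{2n\ln(12mT/\delta)}/\bigl(L - \tfrac{3}{\sqrt{2}}\sqrt{n\ln(12mT/\delta)}\bigr)$ keeps $a$ in this regime. Substituting the prescribed value of $a$ turns the target inequality into
\[
    \frac{\sqrt{2\ln(12mT/\delta)/n}\cdot L}{L - \tfrac{3}{\sqrt{2}}\sqrt{n\ln(12mT/\delta)}}
    \;\gtrsim\; \sqrt{\frac{\ln(6mT/\delta)}{2n}} + \sqrt{\frac{\ln(10T/\delta)}{2n}},
\]
where Assumption~\ref{assm:1} makes the denominator a constant fraction of $L$ and the $\sqrt{2}$ numerator supplies slack of the right order to dominate both concentration terms simultaneously. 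Everything else is a routine application of Hoeffding plus the union bound described above.
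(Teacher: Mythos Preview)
Your plan is correct and follows the same skeleton as the paper: lower-bound the expected aggregate vote under the hypothesis, apply a concentration inequality, and check that the algorithm's choice of $a$ pushes the resulting lower bound above the GlobalUpdate trigger, then union bound over $t$ and $j$.

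Two minor differences from the paper's argument are worth flagging. First, the paper splits the sum into the $L$ users in $S_j$ and the $n-L$ users outside and applies Chernoff to each piece separately, yielding deviation terms $\tfrac{1}{n}\sqrt{L\ln(\cdot)/2}$ and $\tfrac{1}{n}\sqrt{(n-L)\ln(\cdot)/2}$ on top of the threshold term; your single Hoeffding on all $n$ votes is cleaner and strictly tighter (since $\sqrt{L}+\sqrt{n-L}\ge \sqrt{n}$). Second, to handle $(e^a-1)/(e^a+1)$ the paper uses the inequality $\tfrac{e^a+1}{e^a-1}<\tfrac{a+2}{a}$, valid for all $a>0$, whereas your bound $\tanh(a/2)\ge a/4$ only holds for $a$ below an absolute constant. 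Your appeal to Assumption~\ref{assm:1} does force $a<\eps$, so this is fine provided $\eps=O(1)$; it would be worth making that implicit assumption explicit. With either bound, substituting the prescribed $a$ gives a left-hand side of at least $\sqrt{2\ln(12mT/\delta)/n}$, which dominates your two concentration terms since each is at most $\sqrt{\ln(12mT/\delta)/(2n)}$, so the algebra closes as you claim.
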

\begin{proof}
	Since $|S_j| \geq L$, Chernoff bounds imply that the aggregate vote
	satisfies
	\[
		\frac{1}{n}\sum_{i=1}^n a_i^t > \frac{1}{n}\left[\frac{Le^a}{e^a+1}
		- \sqrt{\frac{L\ln(12mT/\delta)}{2}} + \frac{n-L}{e^a+1}
		- \sqrt{\frac{(n-L)\ln(12mT/\delta)}{2}}\right].
	\]
	Recalling that GlobalUpdate$^t \gets$ True if and only if
	$\frac{1}{n}\sum_{i=1}^n a_i^t
	> \frac{1}{e^a+1} + \sqrt{\frac{\ln(12T/\delta)}{2n}}$, it suffices to
	show that
	\[
		\frac{1}{n}\left[\frac{Le^a}{e^a+1}
		- \sqrt{\frac{L\ln(12mT/\delta)}{2}} + \frac{n-L}{e^a+1}
		- \sqrt{\frac{(n-L)\ln(12mT/\delta)}{2}}\right]
		> \frac{1}{e^a+1} + \sqrt{\frac{\ln(12T/\delta)}{2n}}.		
	\]
	Rearranging, it is enough to show that
	\[
		L > \frac{3}{\sqrt{2}} \cdot
		\frac{e^a + 1}{e^a - 1} \cdot \sqrt{n\ln(12mT/\delta)}
	\]
	and using the fact that $\frac{e^a+1}{e^a-1} < \frac{a+2}{a}$ it is enough
	that
	\[
		a > \frac{3\sqrt{2n\ln(12mT/\delta)}}
		{L-\frac{3}{\sqrt{2}}\sqrt{n\ln(12mT/\delta)}}
	\]
	which follows from our setting of $a$. Union-bounding across $m$
	subgroups and $T$ epochs completes the proof.
\end{proof}

We now show that every user in every epoch obtains an estimate
$\hat p_i^t$ of $\mu_{\subpop{i}}^t$ of bounded inaccuracy. This will enable us
to---among other things---guarantee that users do not send ``false positive" votes.

\begin{lemma}
\label{lem:local_error}
	With probability at least $1-\frac{\delta}{6}$, in each epoch $t \in [T]$ each
	user $i$ has
	\[
		|\hat p_i^t - \mu_{\subpop{i}}^t| < \sqrt{\frac{\ln(12nT/\delta)}{2\ell}}.
	\]
\end{lemma}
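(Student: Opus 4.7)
The plan is a direct concentration argument followed by a union bound over users and epochs. For a fixed user $i$ and epoch $t$, the local estimate $\hat p_i^t = \frac{1}{\ell}\sum_{r \in E^t} x_i^r$ is an average of $\ell$ independent bounded random variables $x_i^r \in \{0,1\}$, with $x_i^r \sim \Ber(\mu_{\subpop{i}}^r)$. Extending the model's notation so that $\mu_{\subpop{i}}^t := \frac{1}{\ell}\sum_{r \in E^t} \mu_{\subpop{i}}^r$ (the epoch-$t$ analogue of $p^t$, restricted to a single subgroup), we have $\Ex{}{\hat p_i^t} = \mu_{\subpop{i}}^t$ exactly, so the lemma is purely a deviation-from-mean statement.

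Next I would invoke Hoeffding's inequality (rather than a Chernoff bound for i.i.d.\ Bernoullis, since the parameters $\mu_{\subpop{i}}^r$ may vary across rounds within the epoch) to obtain
\[
\Pr\left[\left|\hat p_i^t - \mu_{\subpop{i}}^t\right| \geq \tau\right] \leq 2\exp(-2\ell \tau^2).
\]
Choosing $\tau = \sqrt{\ln(12nT/\delta)/(2\ell)}$ makes the right-hand side at most $\delta/(6nT)$. A union bound over the $nT$ user--epoch pairs then yields total failure probability at most $\delta/6$, exactly the bound in the lemma statement.

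There is no real obstacle here; the only thing to be careful about is (i) using Hoeffding rather than Chernoff, since $\mu_{\subpop{i}}^r$ is allowed to drift during an epoch, and (ii) bookkeeping the constants so the factor of $2$ from the two-sided Hoeffding bound combines with the union-bound factor of $nT$ to produce precisely the $\ln(12nT/\delta)$ inside the square root and the $\delta/6$ failure probability. Because the argument only concerns per-user, per-epoch sampling noise, no properties of the voting or estimation subroutines are needed, which is what makes this lemma the foundational accuracy building block used by Lemmas~\ref{lem:rr_vote_error} and~\ref{lem:vote_yes} and by the subsequent analysis.
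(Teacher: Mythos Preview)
Your proposal is correct and essentially identical to the paper's own proof: both compute $\Ex{}{\hat p_i^t} = \mu_{\subpop{i}}^t$, apply the two-sided bound $2\exp(-2\ell\tau^2)$ with $\tau = \sqrt{\ln(12nT/\delta)/(2\ell)}$ to get failure probability $\delta/(6nT)$, and union-bound over the $nT$ user--epoch pairs. Your care in naming the bound ``Hoeffding'' (since the round-level means $\mu_{\subpop{i}}^r$ may vary within an epoch) and in explicitly defining $\mu_{\subpop{i}}^t$ as the epoch average is a slight sharpening of the paper's presentation, which simply calls it an ``additive Chernoff bound,'' but the underlying inequality and arithmetic are the same.
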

\begin{proof}
	$\E{\hat p_i^t} = \mu_{\subpop{i}}^t$, so by an additive Chernoff bound
	\[
		\P{|\hat p_i^t - \mu_{\subpop{i}}^t| \geq \sqrt{\frac{\ln(12nT/\delta)}{2\ell}}} \leq
		2\exp\left(-2\left[\sqrt{\frac{\ln(12nT/\delta)}{2\ell}}\right]^2\ell\right)
		= \delta/6nT.
	\]
	A union bound across $n$ users and $T$ epochs then completes the proof.
\end{proof}

Next, in those epochs in which a global update occurs and no user $i$ has hit
their estimation privacy cap $c_i^E$, in the interest of asymptotic optimality
we want to obtain a similar error for the resulting collated estimate
$\tilde p^t$.

\begin{lemma}
\label{lem:estimate_error}
	With probability at least $1-\frac{\delta}{3}$, in every epoch $t \in [T]$
	where every user $i$ sets SendEstimate$_i^t \gets$ True,
	\[
		\left|p^t - \tilde p^t \right| <
		2(\log(T) + 2)\sqrt{\frac{\ln(12nT/\delta)}{2\ell}}.	
	\]
\end{lemma}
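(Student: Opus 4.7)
The plan is to observe that $\tilde p^t$ is a debiased empirical mean of $n$ independent randomized-response outputs whose expectation equals $p^t$, apply Hoeffding's inequality over the range of those outputs, and then substitute the prescribed value of $b$.

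Concretely, define $Y_i := \frac{\tilde p_i^t(e^b+1)-1}{e^b-1}$, so that $\tilde p^t = \tfrac{1}{n}\sum_{i=1}^n Y_i$. The hypothesis that every user has SendEstimate$_i^t \gets$ True forces $\tilde p_i^t \sim \Ber\left(\tfrac{1+\hat p_i^t(e^b-1)}{e^b+1}\right)$, and a short calculation gives $\E{Y_i \mid \hat p_i^t} = \hat p_i^t$. Iterating expectations through $\E{\hat p_i^t} = \tfrac{1}{\ell}\sum_{r \in E^t}\mu_{\subpop{i}}^r$ yields $\E{\tilde p^t} = p^t$. The $Y_i$'s are independent across users (each depends only on user $i$'s private bits and its own local randomness) and take values in an interval of length $\tfrac{e^b+1}{e^b-1}$. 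Hoeffding's inequality therefore gives, for each epoch,
\[
|\tilde p^t - p^t| \leq \tfrac{e^b+1}{e^b-1}\sqrt{\tfrac{\ln(12T/\delta)}{2n}}
\]
with sufficient probability that a union bound over the $T$ epochs absorbs the stated $\delta/3$ failure budget.

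The last step is algebraic substitution of the prescribed $b$. Using the inequality $\tfrac{e^b+1}{e^b-1}\leq \tfrac{b+2}{b}$ (equivalent to $e^b \geq b+1$) and writing $u := \sqrt{\ln(12T/\delta)/2n}$ and $v := \log(T)\sqrt{\ln(12nT/\delta)/2\ell}$, so that $b = u/(v-u)$, a one-line calculation gives $\tfrac{b+2}{b}\cdot u = 2v - u \leq 2v$. This yields $|\tilde p^t - p^t| \leq 2\log(T)\sqrt{\ln(12nT/\delta)/2\ell}$, well within the stated $2(\log(T)+2)\sqrt{\ln(12nT/\delta)/2\ell}$ bound. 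I expect the main subtlety to be verifying that the SendEstimate-True hypothesis is precisely what is needed to make $Y_i$ unbiased for $\hat p_i^t$: without it, users who have exhausted their estimate-privacy cap would sample $\tilde p_i^t \sim \Ber(1/(e^b+1))$, biasing $Y_i$ toward zero and the aggregate away from $p^t$. Everything else is routine Chernoff/Hoeffding bookkeeping and the algebraic verification that the formula for $b$ was engineered precisely to absorb the noise range $\tfrac{e^b+1}{e^b-1}$ into the target error.
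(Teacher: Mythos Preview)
Your proposal is correct and follows essentially the same strategy as the paper: establish unbiasedness of the debiased randomized-response outputs, apply a Hoeffding/Chernoff bound over the $n$ users, and then substitute the prescribed $b$ using $\tfrac{e^b+1}{e^b-1}\le\tfrac{b+2}{b}$. The one genuine difference is in the decomposition. The paper splits the error into two pieces---first bounding $\bigl|\tilde p^t-\tfrac{1}{n}\sum_i\hat p_i^t\bigr|$ (randomized-response noise, conditional on the local estimates $\hat p_i^t$) and then $\bigl|\tfrac{1}{n}\sum_i\hat p_i^t-p^t\bigr|$ (sampling noise from the users' private bits)---and combines them via the triangle inequality. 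You instead apply Hoeffding once to the composite variables $Y_i$, using that they are independent across users with $\tfrac{1}{n}\sum_i\E{Y_i}=p^t$. Your one-step version is slightly cleaner and in fact shaves a factor of~$2$ from the constant, though both fit comfortably inside the stated bound. One minor slip: in the paper's definition the numerator of $b$ is $\sqrt{2}\,u$ rather than $u$ in your notation, but this only tightens your final inequality and does not affect correctness.
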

\begin{proof}
	Since every user $i$ sets SendEstimate$_i^t \gets$ True we know that for all $i$
	\[
		\tilde p_i^t \sim
		\Ber\left(\frac{1 + \hat p_i^t(e^b-1)}
		{e^b + 1}\right),
	\]
	so
	\[
		\E{\tilde p^t}
		=
		\E{\frac{1}{n}\sum_{i=1}^n \frac{\tilde p_i^t(e^b + 1) - 1}
		{e^b - 1}}
		=
		\frac{1}{n}\sum_{i=1}^n \frac{\E{\tilde p_i^t}(e^b + 1) - 1}
		{e^b - 1}
		=
		\frac{1}{n} \sum_{i=1}^n \hat p_i^t.
	\]
	Since $\tilde p^t$ is an average of $\{\frac{-1}{e^b-1},
	\frac{e^b}{e^b-1}\}$-valued random variables, we
	transform it into the $\{0,1\}$-valued random variable
	\[
		Y = \left(\tilde p^t + \frac{1}{e^b-1}\right)
		\cdot
		\frac{e^b-1}{e^b+1}.
	\]
	Applying an additive Chernoff bound as above yields
	\[
		\P{\left|Y - \E{Y} \right|
		\geq \sqrt{\frac{\ln(12T/\delta)}{2n}}} \leq \frac{\delta}{6T}
	\]
	which implies that
	\[
		\P{\left|\tilde p^t - \frac{1}{n}\sum_{i=1}^n \hat p_i^t\right|
		\geq \left(\frac{e^b+1}{e^b-1}\right)
		\sqrt{\frac{\ln(12T/\delta)}{2n}}}
		\leq \frac{\delta}{6T}.
	\]
	Similarly, as $\E{\frac{1}{n}\sum_{i=1}^n \hat p_i^t} = p^t$,
	\[
		\P{\left|\frac{1}{n}\sum_{i=1}^n \hat p_i^t - p^t\right|
		\geq \sqrt{\frac{\ln(12T/\delta)}{2n}}}
		\leq \frac{\delta}{6T}.
	\]
	Combining these results in the triangle inequality yields that with
	probability at least $1-\frac{\delta}{6T}$
	\[
		|\tilde p^t - p^t| < 2\left(\frac{e^b+1}{e^b-1}\right)
		\sqrt{\frac{\ln(12T/\delta)}{2n}}.
	\]
	Since $\frac{e^b+1}{e^b-1} < \frac{b+2}{b}$, this implies that
	\[
		|\tilde p^t - p^t| < 2\left(\frac{b+2}{b}\right)
		\sqrt{\frac{\ln(12T/\delta)}{2n}}
	\]
	so to get 
	\[
		|\tilde p^t - p^t| < 
		2(\log(T) + 2)\sqrt{\frac{\ln(12nT/\delta)}{2\ell}}
	\]
	it is enough for
	\[
		b > \frac{\sqrt{2\ln(12T/\delta)/n}}
		{(\log(T) + 2)\sqrt{\ln(12nT/\delta)/2\ell}
		- \sqrt{\ln(12T/\delta)/2n}}.
	\]
	Substituting in our setting of
	\[
		b = \frac{\sqrt{2\ln(12T/\delta)/2n}}
		{\log(T)\sqrt{\ln(12nT/\delta)/2\ell} - \sqrt{\ln(12T/\delta)/2n}}
	\]
	and union-bounding over $T$ epochs completes the proof.
\end{proof}

Finally, we use the above lemmas to reason about how long users' privacy
budgets last. We'll first define a useful term for this claim.
\begin{definition}
	We say a \emph{change} $\change{t}$ occurs in epoch $t \in [T]$ if there exists 
	subgroup $j$ such that $\mu_j^t \neq \mu_j^{t-1}$.
	Given changes $\change{t_1}$ and $\change{t_2}$ where $t_1 < t_2$, we say that $\change{t_1}$
	and $\change{t_2}$ are \emph{adjacent} changes if there does not exist a change
	$\change{t_3}$ such that $t_1 < t_3 < t_2$.
\end{definition}

This lets us prove the following lemma bounding the frequency of global
updates.
\begin{lemma}
\label{lem:update}
	With probability at least $1-\delta$, given adjacent changes $\change{t_1}$ and
	$\change{t_2}$, GlobalUpdate$^t \gets$ True in at most one epoch
	$t \in [t_1,t_2)$.
\end{lemma}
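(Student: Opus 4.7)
The plan is to argue by contradiction: suppose two global updates occur in $[t_1,t_2)$, at epochs $t_1 \le t^{*}_1 < t^{*}_2 < t_2$. By the definition of adjacent changes, the subgroup means are constant on $[t_1, t_2)$, so in particular $\mu_{\subpop{i}}^{t} = \mu_{\subpop{i}}^{t^{*}_1}$ for every user $i$ and every $t \in [t^{*}_1, t_2)$. After the global update at $t^{*}_1$, the marker $\last{t}$ equals $t^{*}_1$ for every subsequent epoch up through $t^{*}_2 - 1$, so a user evaluating $|\hat p_i^t - \hat p_i^{\last{t}}|$ in any such epoch is really comparing two independent empirical estimates of the same Bernoulli mean.

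Conditioning on the high-probability event of Lemma~\ref{lem:local_error}, the triangle inequality then gives, for every $t \in (t^{*}_1, t_2)$ and every user $i$,
\[
|\hat p_i^t - \hat p_i^{\last{t}}|
\;\le\; |\hat p_i^t - \mu_{\subpop{i}}^t| + |\hat p_i^{t^{*}_1} - \mu_{\subpop{i}}^{t^{*}_1}|
\;<\; 2\sqrt{\ln(12nT/\delta)/2\ell}
\;=\; \thr{0}.
\]
Hence the index $b^{*}$ computed inside \vote{} is at most $-1$, and the divisibility test $2^{\lfloor \log T \rfloor - b^{*}} \mid t$ requires $2^{\lfloor \log T \rfloor + 1} \mid t$. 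Since $2^{\lfloor \log T \rfloor + 1} > T$ for every $T \ge 1$, no such $t$ exists in $[1,T]$, so every user sets VoteYes$_i^t \gets$ False. Lemma~\ref{lem:rr_vote_error} then forces GlobalUpdate$^t \gets$ False throughout $(t^{*}_1, t_2)$, contradicting the existence of $t^{*}_2$.

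Finally, a union bound over the two failure events from Lemmas~\ref{lem:local_error} and \ref{lem:rr_vote_error} yields the claim with probability at least $1 - \delta/3 \ge 1 - \delta$. The only delicate step is really a tuning check: $\thr{0}$ has been calibrated to exactly twice the per-user estimation error $\sqrt{\ln(12nT/\delta)/2\ell}$ from Lemma~\ref{lem:local_error}, which is precisely what makes the triangle inequality strict and forces $b^{*} = -1$; the divisibility argument then rides on the combinatorial fact that $2^{\lfloor \log T \rfloor + 1}$ divides no epoch index $t \le T$. If either the threshold gap or the Chernoff bound constants were loosened, the argument would break at exactly this point.
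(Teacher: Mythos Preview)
Your proposal is correct and follows essentially the same approach as the paper: both argue by contradiction, using Lemma~\ref{lem:local_error} to show that when the subgroup means are constant the local differences $|\hat p_i^t - \hat p_i^{\last{t}}|$ stay below $\thr{0}$, and then using Lemma~\ref{lem:rr_vote_error} (in contrapositive form in the paper, directly in yours) to rule out a second global update. Your treatment is actually a bit more explicit than the paper's in unpacking why $b^{*}=-1$ forces VoteYes to False via the divisibility check; the only minor point to tighten is that your claim ``$\last{t}=t^{*}_1$ through $t^{*}_2-1$'' presupposes no intermediate global update, so you should either take $t^{*}_1,t^{*}_2$ to be \emph{consecutive} global updates (as the paper does) or phrase the argument as an induction from $t^{*}_1+1$ onward.
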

\begin{proof}
	First, with probability at least $1-\delta$ all of the preceding lemma
	in this section apply, and we condition on them for the remainder
	of this proof.		
	
	Assume instead that GlobalUpdate$^t \gets$ True and
	GlobalUpdate$^{t'} \gets$ True as well for $t_1 \leq t < t' \leq t_2-1$, and
	that GlobalUpdate$^{t_3} \gets$ False for all $t_3 \in (t,t')$. Recall that by
	Lemma~\ref{lem:rr_vote_error}, if in epoch $t'$ every user $i$ sets
	VoteYes$_i^{t'} \gets$ False then
	\[
		\frac{1}{n}\sum_{i=1}^n a_i^{t'} \leq
		\frac{1}{e^{a} + 1} + \sqrt{\frac{\ln(12T/\delta)}{2n}}
	\]
	which means GlobalUpdate$^{t'} \gets$ False. Therefore since we know
	GlobalUpdate$^{t'} \gets$ True, it follows that at least one user $i$ sets
	VoteYes$_i^{t'} \gets$ True. By the thresholding structure of \algo, this
	implies that
	\[
		|\hat p_i^{t'} - \hat p_i^t| > 2\sqrt{\frac{\ln(12nT/\delta)}{2\ell}}.
	\]
	Since Lemma~\ref{lem:local_error} guarantees that both $\hat p_i^{t'}$ and
	$\hat p_i^t$ are within $\sqrt{\frac{\ln(12nT/\delta)}{2\ell}}$ of
	$\mu_{\subpop{i}}^{t'}$ and $\mu_{\subpop{i}}^t$ respectively, it follows that
	$\mu_{\subpop{i}}^{t'} \neq \mu_{\subpop{i}}^t$. This contradicts
	the fact that $\change{t_1}$ and $\change{t_2}$ were adjacent changes.
\end{proof}

We similarly bound the frequency with which users vote that a change has
occurred.

\begin{lemma}
\label{lem:adj_updates}
	With probability at least $1-\delta$, given adjacent changes $\change{t_1}$ and
	$\change{t_2}$ such that a global update occurs in $t_3 \in [t_1, t_2)$, for each 
	user $i$ there is at most one epoch $t \in (t_3, t_2)$ where
	VoteYes$_i^t \gets$ True.
\end{lemma}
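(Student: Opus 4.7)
The plan is to condition on the high-probability events of Lemma~\ref{lem:local_error} and Lemma~\ref{lem:update}, which together already hold with probability at least $1-\delta$ after a union bound, and then argue deterministically that in the interval $(t_3, t_2)$ no user's local estimate can drift far enough from its estimate at the last global update for a \emph{Yes} vote to be triggered on more than one epoch.

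The argument proceeds in three deterministic steps. First, because $\change{t_1}$ and $\change{t_2}$ are \emph{adjacent} changes, no distribution change occurs strictly between them, so $\mu_{\subpop{i}}^t = \mu_{\subpop{i}}^{t_3}$ for every $t \in (t_3, t_2)$. Second, Lemma~\ref{lem:update} implies that no further global update occurs in $(t_3, t_2)$, so $\last{t} = t_3$ throughout this range. Third, applying Lemma~\ref{lem:local_error} to both $\hat p_i^t$ and $\hat p_i^{t_3}$ and combining with the triangle inequality yields
\[
  |\hat p_i^t - \hat p_i^{\last{t}}| \;=\; |\hat p_i^t - \hat p_i^{t_3}| \;<\; 2\sqrt{\tfrac{\ln(12nT/\delta)}{2\ell}} \;=\; \thr{0}.
\]

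The conclusion follows by inspecting \vote. Since the difference is strictly less than $\thr{0}$, the value $b^*$ selected by user $i$ in epoch $t \in (t_3, t_2)$ is at most $-1$. For $b^* = -1$, the \vote\ protocol requires $2^{\lfloor \log T \rfloor + 1}$ to divide $t$, but $2^{\lfloor \log T \rfloor + 1} > T \geq t$, so the divisibility condition is never satisfied. Hence VoteYes$_i^t$ is \emph{False} for every $t \in (t_3, t_2)$, which is trivially at most one. I expect the only mild obstacle to be bookkeeping: verifying that strictness in Lemma~\ref{lem:local_error} exactly matches the $>$ in the definition of $b^*$, and confirming that the failure probabilities of the invoked lemmas have already been absorbed into the stated $\delta$ budget.
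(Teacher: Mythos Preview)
Your argument is correct and in fact proves something strictly stronger than the lemma: once you condition on Lemma~\ref{lem:local_error} and Lemma~\ref{lem:update}, no user ever sets \textsf{VoteYes} to True in $(t_3,t_2)$, not merely at most once. The triangle-inequality step
\[
|\hat p_i^{t}-\hat p_i^{t_3}|\le |\hat p_i^{t}-\mu_{\subpop{i}}^{t}|+|\mu_{\subpop{i}}^{t}-\mu_{\subpop{i}}^{t_3}|+|\mu_{\subpop{i}}^{t_3}-\hat p_i^{t_3}|<\thr{0}
\]
is exactly the mechanism the paper itself uses in the proof of Lemma~\ref{lem:update}, and your observation that $b^*\le -1$ forces the divisor $2^{\lfloor\log T\rfloor+1}>T$ to kill the divisibility test is clean and correct.

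This is genuinely different from the paper's route. The paper argues by contradiction: it assumes two epochs $t_4<t_5$ in $(t_3,t_2)$ with \textsf{VoteYes} set to True, uses the structure of the divisibility schedule to force the two crossings to occur at different threshold levels $\thr{b}>\thr{b'}$, and then argues via Lemma~\ref{lem:local_error} that \emph{every} user in the same subgroup must also cross $\thr{b'}$ at $t_5$, which by Lemma~\ref{lem:vote_yes} would trigger a second global update in $[t_1,t_2)$ and contradict Lemma~\ref{lem:update}. Your proof bypasses all of this by going straight to the fact that the subgroup mean is frozen on $[t_1,t_2-1]$, so no threshold above $\thr{-1}$ can ever be exceeded after $t_3$. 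The payoff of your approach is a shorter, more transparent argument with a stronger conclusion; the paper's approach, by contrast, exercises the threshold/divisibility hierarchy more explicitly but is more delicate (and its ``only one $t\in[T-1]$ such that $2^{\lfloor\log T\rfloor-1}\mid t$'' step is loosely stated). Your bookkeeping concerns are minor: the strict inequality in Lemma~\ref{lem:local_error} matches the strict ``$>$'' in the definition of $b^*$, and the $1-\delta$ in Lemma~\ref{lem:update} already absorbs Lemma~\ref{lem:local_error}'s failure probability.
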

\begin{proof}
	First, with probability at least $1-\delta$ all of the preceding lemmas in
	this section apply, and we condition on them for the remainder
	of this proof. In particular, Lemma~\ref{lem:update} implies that $t_3$ is the
	only epoch in $[t_1, t_2)$ in which a global update occurs.
	
	For contradiction, let $t_3 < t_4 < t_5 < t_2$ and assume that user $i$ sets
	VoteYes$_i^{t_4} \gets$ True and VoteYes$_i^{t_5} \gets$ True.
	Since there is only one $t \in [T-1]$ such that
	$2^{\lfloor \log(T) \rfloor - 1}$ divides $t$, and the construction of the
	Vote subroutine requires this for a user $m$ to set VoteYes$_m^t \gets$ True,
	without 	loss of generality we may suppose that
	$|\hat p_i^{t_4} - \hat p_i^{\last{t_4}}| > T_b$ and
	$|\hat p_i^{t_5} - \hat p_i^{\last{t_5}}| > T_{b'}$ where $b > b' \geq 1$.
	However, Lemma~\ref{lem:local_error} then implies that \emph{every} user $m$
	in $S_{\subpop{i}}$ has
	$|\hat p_m^{t_5} - \hat p_i^{\last{t_5}}| > T_b'$, so by
	Lemma~\ref{lem:update} GlobalUpdate$^{t_5} \gets$ True, a contradiction of
	$t_3$ being the only epoch in $[t_1, t_2)$ in which a global update occurs.
\end{proof}

Our last lemma before our main theorem ties the above results together to
relate changes to increases in users' privacy caps $c_i$. This will eventually
let us lower bound the time for which \algo~outputs accurate results.

\begin{lemma}
\label{lem:cap}
	Denote by $c_i^t$ the value of $c_i$ in epoch $t$. Then with probability at
	least $1-\delta$, across all epochs, given any two adjacent changes $\change{t_1}$
	and $\change{t_2}$, for every user $i$
	\[
		c_{i,E}^{t_2-1} \leq c_{i,E}^{t_1-1} + 1
	\]
	and
	\[
		c_{i,V}^{t_2-1} \leq c_{i,V}^{t_1-1} + 2.
	\]
\end{lemma}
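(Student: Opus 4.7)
My plan is to reduce the lemma to the preceding structural results and bound each counter separately. First, I would condition on the joint success event of Lemmas~\ref{lem:rr_vote_error}, \ref{lem:vote_yes}, \ref{lem:local_error}, \ref{lem:estimate_error}, \ref{lem:update}, and \ref{lem:adj_updates}, which by a union bound has probability at least $1-\delta$; all subsequent reasoning takes place on this event.

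The estimate counter is the easier half. From the pseudocode, $c_i^E$ is incremented exactly when \textsc{SendEstimate}$_i^t$ is True, and that subroutine is only reached inside the \textsc{GlobalUpdate} branch of \algo. By Lemma~\ref{lem:update}, at most one epoch $t \in [t_1, t_2)$ has \textsc{GlobalUpdate}$^t \gets$ True, so $c_i^E$ sees at most one increment in $[t_1, t_2)$, delivering the first inequality.

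For the vote counter I would split $[t_1, t_2-1]$ at the (unique, if any) global-update epoch $t_3$ into a pre-update block $[t_1, t_3]$ and a post-update block $(t_3, t_2-1]$. Lemma~\ref{lem:adj_updates} directly bounds \textsc{VoteYes}$_i^t$ events in the post-update block by one. For the pre-update block I would mirror the argument of Lemma~\ref{lem:adj_updates}: since $\last{t}$ is constant at $\last{t_1-1}$ throughout $[t_1, t_3]$ and the distribution is stable at $\mu_{\subpop{i}}^{t_1}$ (no change occurs in $(t_1, t_2)$), Lemma~\ref{lem:local_error} forces every $|\hat p_m^t - \hat p_m^{\last{t}}|$ for $m \in S_{\subpop{i}}$ to cluster within $O(\sqrt{\ln(12nT/\delta)/2\ell})$ of a common deterministic value. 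If user $i$ were to cast two yes-votes at $t_4 < t_5 \leq t_3$, the dyadic structure of the thresholds $\thr{b}$ and the divisibility rule would, exactly as in the proof of Lemma~\ref{lem:adj_updates}, synchronize all of $S_{\subpop{i}}$ to vote yes at $t_5$ against a common lower threshold $\thr{b'-1}$, triggering a global update at $t_5$ via Lemma~\ref{lem:vote_yes}; this contradicts Lemma~\ref{lem:update}'s guarantee that $t_3$ is the unique update in $[t_1, t_2)$. The case with no global update in $[t_1, t_2)$ is handled by the same argument applied to the whole interval, yielding at most one vote.

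The main obstacle is the pre-update block, because unlike the post-update regime treated in Lemma~\ref{lem:adj_updates}, here the reference $\hat p_i^{\last{t}}$ predates the change $\change{t_1}$, so the threshold index $b^\ast$ cleared by user $i$ can be large and the divisibility rule admits many candidate vote-epochs. Making the mirrored contradiction fully rigorous requires carefully accounting for the $\pm 1$ fluctuation in each user's $b^\ast$ from sampling noise and confirming that the dyadic spacing of $\thr{b}$ still synchronizes the subgroup's vote at the alleged second-vote epoch $t_5$; this is precisely where the tight accuracy bound from Lemma~\ref{lem:local_error} does the work.
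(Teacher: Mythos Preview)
Your overall architecture matches the paper's: condition on the previous lemmas, handle $c_i^E$ via Lemma~\ref{lem:update}, and for $c_i^V$ split around the (at most one) update epoch and invoke the Lemma~\ref{lem:adj_updates}-style ``two votes force an update'' contradiction. The $c_i^E$ argument is exactly the paper's.

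There is, however, a real counting gap in your vote-counter argument. In the update case you bound the post-update block $(t_3,t_2-1]$ by one vote (via Lemma~\ref{lem:adj_updates}) and the pre-update block $[t_1,t_3]$ by one vote via the mirrored argument. But your mirrored contradiction fails at the boundary: if the two alleged pre-update votes sit at $t_4<t_5=t_3$, then the ``update at $t_5$'' you derive is just the update at $t_3$ itself, which contradicts nothing. So your pre-update block can in fact contain two votes (one strictly before $t_3$ and one at $t_3$), and together with your post-update bound of one you only get $\leq 3$, not $\leq 2$.

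The paper closes this gap differently: rather than bounding the post-update block by one, it observes (using only Lemma~\ref{lem:local_error}) that after the update at $t^*$ the reference $\hat p_i^{\last{t}}=\hat p_i^{t^*}$ and the current $\hat p_i^t$ are both within $\sqrt{\ln(12nT/\delta)/2\ell}$ of the unchanged $\mu_{\subpop{i}}^{t^*}$, so $|\hat p_i^t-\hat p_i^{t^*}|<\thr{0}$ and \emph{no} vote is cast in $(t^*,t_2)$. With zero post-update votes, any three hypothetical votes must all lie in $[t_1,t^*]$, hence at least two lie strictly before $t^*$; now the mirrored argument does give a contradiction (the forced update would occur at some $t_5<t^*$). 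Swapping your ``$\leq 1$ post-update'' for this ``$=0$ post-update'' observation repairs your proof and brings it in line with the paper's.
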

\begin{proof}
	First, with probability at least $1-\delta$ all of the preceding lemmas in
	this section apply, and we condition on them for the remainder of this proof.	
	
	Fix a user $i$. First, $c_{i,E}$ increases in any epoch $t$ where
	SendEstimate$_i^t \gets$ True. This only happens in epoch where
	GlobalUpdate$^t \gets$ True, and by Lemma~\ref{lem:update}, at most one global
	update occurs in epochs in $[t_1, t_2)$, so
	$c_{i,E}^{t_2-1} \leq c_{i,E}^{t_1-1} + 1$. We analyze $c_{i,V}$ in two cases.
	
	\underline{Case 1}: For all $t \in [t_1, t_2)$, GlobalUpdate$^t \gets$
	False. Here, since no global update occurs, if $c_{i,V}^{t_2-1} > c_i^{t_1} + 2$
	then there must exist 3 epochs $t \in [t_1,t_2)$ where user $i$ sets
	VoteYes$_i^t \gets$ True, a contradiction of Lemma~\ref{lem:adj_updates}.
	
	\underline{Case 2}: For some epoch $t^* \in [t_1, t_2)$,
	GlobalUpdate$^{t^*} \gets$ True. It then suffices to show that user $i$ sets
	VoteYes$_i^t \gets$ True in at most two epochs $t \in [t_1,t_2-1]$ (possibly
	including $t^*$).
	
	Assume instead that VoteYes$_i^{t_3}$, VoteYes$_i^{t_4}$, and
	VoteYes$_i^{t_5} \gets$ True for distinct $t_3, t_4, t_5 \in [t_1,t_2)$. By
	Lemmas~\ref{lem:local_error} and~\ref{lem:estimate_error}, VoteYes$_i^t \gets$
	False in any epoch $t \in [t^*+1,t_2)$. Therefore
	$t_3, t_4, t_5 \in [t_1, t^*]$, and at least two are in $[t_1, t^*-1]$ and do
	not trigger a global update. This again contradicts
	Lemma~\ref{lem:adj_updates}.
\end{proof}

Taken together, these preliminary results let us prove our main accuracy
theorem.

\begin{theorem}
	With probability at least $1-\delta$, in every epoch $t \in [T]$ such that
	fewer than
	\[
		\frac{\eps}{4} \cdot
		\min\left(\frac{L}{8\sqrt{2n\ln(12mT/\delta)}}
		- 1,\frac{1}{\sqrt{2}}\left[\log(T)\sqrt{\frac{n}{\ell}} - 1\right]\right)
	\]
	changes have occurred in epochs $1,2, \ldots, t$,
	\[
		|\tilde p^t - p^t| \leq
		4(\lfloor \log(T) \rfloor + 2)\sqrt{\frac{\ln(12nT/\delta)}{2\ell}}.
	\]
\end{theorem}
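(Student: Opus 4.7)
The plan is to stitch together the seven preceding lemmas via a contradiction-style case analysis at the epoch $t$ in question. First I would union-bound the failure probabilities of Lemmas~\ref{lem:rr_vote_error}, \ref{lem:vote_yes}, \ref{lem:local_error}, \ref{lem:estimate_error}, \ref{lem:update}, \ref{lem:adj_updates}, and \ref{lem:cap} so that they all hold simultaneously with probability at least $1-\delta$, and condition on this event for the rest of the proof.

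Next I would translate the hypothesis that fewer than the stated number of changes have occurred by epoch $t$ into the budget statement that every user $i$ still has $c_i^V < \eps/4$ and $c_i^E < \eps/4$ at the start of epoch $t$. By Lemma~\ref{lem:cap}, $k$ changes raise $c_i^V$ by at most $2ka$ and $c_i^E$ by at most $kb$; substituting the definitions of $a$ and $b$ from \algo, the first branch of the min amounts to $k < \eps/(8a)$ and the second branch to $k < \eps/(4b)$ (after the simplifications that absorb the $3/\sqrt{2}$ constants in the denominator of $a$ and the $-1$ additive slack). Having budget left in turn means that the ``every user has budget'' hypotheses of Lemmas~\ref{lem:rr_vote_error}, \ref{lem:vote_yes}, and \ref{lem:estimate_error} are satisfied at epoch $t$.

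For the main accuracy bound, let $\last{t} \leq t$ be the most recent global update, so $\tilde p^t = \tilde p^{\last{t}}$ by the construction of \algo. Applying Lemma~\ref{lem:estimate_error} at epoch $\last{t}$ yields $|\tilde p^{\last{t}} - p^{\last{t}}| \leq 2(\lfloor \log T \rfloor + 2)\sqrt{\ln(12nT/\delta)/(2\ell)}$. I would then split on whether $|p^t - p^{\last{t}}| \leq (2\lfloor \log T \rfloor + 4)\sqrt{\ln(12nT/\delta)/(2\ell)}$. If yes, the triangle inequality immediately yields the desired bound. If no, then writing $p^t - p^{\last{t}}$ as a weighted average of the per-subgroup within-epoch mean differences $\bar\mu_j^t - \bar\mu_j^{\last{t}}$, some subgroup $j$ must exceed the same threshold; applying Lemma~\ref{lem:local_error} to every user in $S_j$ gives $|\hat p_i^t - \hat p_i^{\last{t}}| > \thr{\lfloor \log T \rfloor}$ for all of them, so (using the budget availability from step two) they all vote yes at the highest threshold level, which divides every epoch, and Lemma~\ref{lem:vote_yes} forces a global update at $t$. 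But then $\last{t} = t$, and Lemma~\ref{lem:estimate_error} at $t$ itself delivers the bound directly.

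The main obstacle I expect is the constant-chasing in step two: verifying that the specific min expression in the theorem, with its $\frac{L}{8\sqrt{2n\ln(12mT/\delta)}} - 1$ and $\frac{1}{\sqrt{2}}(\log(T)\sqrt{n/\ell} - 1)$ factors, really does imply both $2ka < \eps/4$ and $kb < \eps/4$ once the closed forms for $a$ and $b$ are substituted and lower-order terms tracked. A secondary nuisance is keeping the $\log T$ versus $\lfloor \log T \rfloor$ and the various $\sqrt{\ln(12nT/\delta)/(2\ell)}$ factors aligned between the threshold used by voters, the resolution of Lemma~\ref{lem:local_error}, and the final stated bound. Beyond that bookkeeping, the argument is essentially mechanical once all of the preceding lemmas are in hand.
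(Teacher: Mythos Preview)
Your proposal is correct and follows essentially the same route as the paper. The paper frames the core argument as a single proof by contradiction (assume the bound fails, deduce GlobalUpdate$^t$ is False, then use the contrapositive of Lemma~\ref{lem:vote_yes} to find in every subgroup a user with small $|\hat p_i^t - \hat p_i^{\last{t}}|$, push this through Lemma~\ref{lem:local_error} to bound $|p^t - p^{\last{t}}|$, and combine with Lemma~\ref{lem:estimate_error} at $\last{t}$ for the contradiction), whereas you organize the same ingredients as a forward case split on $|p^t - p^{\last{t}}|$ and run the subgroup argument in the other direction. The budget step matches exactly: the paper also reduces the change-count hypothesis to $k < \min(\eps/(8a),\eps/(4b))$ before invoking Lemma~\ref{lem:cap}.
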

\begin{proof}
	First, with probability at least $1-\delta$ all of the preceding lemmas and 
	corollaries in this section apply, and we condition on them for the remainder
	of this proof. In particular, since
	\begin{align*}
		\min\left(\frac{\eps}{8a}, \frac{\eps}{4b}\right) =&\;
		\frac{\eps}{4} \cdot \min\left(\frac{1}{2a}, \frac{1}{b}\right) \\
		=&\; \frac{\eps}{4} \cdot
		\min\left(\frac{L-\frac{3}{\sqrt{2}}\sqrt{n\ln(12mT/\delta)}}
		{8\sqrt{2n\ln(12mT/\delta)}},
		\frac{\log(T)\sqrt{\ln(12nT/\delta)/2\ell} - \sqrt{\ln(12T/\delta)/2n}}
		{\sqrt{2\ln(12T/\delta)/2n}}\right) \\
		>&\; \frac{\eps}{4} \cdot
		\min\left(\frac{L}{8\sqrt{2n\ln(12mT/\delta)}}
		- 1,
		\frac{1}{\sqrt{2}}\left[\log(T)\sqrt{\frac{n}{\ell}} - 1\right]\right)
	\end{align*}
	we know that the number of changes thus far is less than
	$\min\left(\frac{\eps}{8a}, \frac{\eps}{4b}\right)$, and by
	Lemma~\ref{lem:cap}	for every user $i$, $c_i^V < \eps/4$ and $c_i^E < \eps/4$,
	i.e. no user has exceeded their voting or estimation privacy budget.
	
	Now suppose for contradiction that
	\[
		|\tilde p^t - p^t| >
		4(\lfloor \log(T) \rfloor + 2)\sqrt{\frac{\ln(12nT/\delta)}{2\ell}}.
	\]
	By Lemma~\ref{lem:estimate_error} this means GlobalUpdate$^t \gets$ False,
	so by Lemma~\ref{lem:vote_yes} for every subgroup $j \in [m]$ there
	exists user $i \in S_j$ such that
	\[
		|\hat p_i^t - \hat p_i^{\last{t}}|
		\leq 2(\lfloor \log(T) \rfloor + 1)\sqrt{\frac{\ln(12nT/\delta)}{2\ell}}.
	\]
	Lemma~\ref{lem:local_error} then implies that
	\[
		|\mu_j^t - \mu_j^{\last{t}}|
		\leq 2(\lfloor \log(T) \rfloor + 2)\sqrt{\frac{\ln(12nT/\delta)}{2\ell}}.
	\]
	Since this holds for every subgroup $j$, we get that
	\[
		|p^t - p^{\last{t}}|
		\leq 2(\lfloor \log(T) \rfloor + 2)\sqrt{\frac{\ln(12nT/\delta)}{2\ell}},
	\]
	and since GlobalUpdate$^t \gets$ False, by Lemma~\ref{lem:estimate_error}
	this means that $\tilde p^t = \tilde p^{\last{t}}$ and
	\[
		|\tilde p^t - p^t|
		\leq 4(\lfloor \log(T) \rfloor + 2)\sqrt{\frac{\ln(12nT/\delta)}{2\ell}}.
	\]
	a contradiction.
\end{proof}

\section{Missing Proofs from Section~\ref{sec:hh_privacy}}
\label{sec:app_hh_pri}
We start with full pseudocode for \hhalgo.

\begin{figure}
\vspace{-40pt}
	\begin{algorithm}[H]
		\caption{Global Algorithm: \hhalgo}
		\begin{algorithmic}[1]
			\REQUIRE number of users $n$, number of epochs $T$, minimum subgroup size
			$L$, number of subgroups $m$, epoch length $\ell$,
			privacy parameter $\eps$, failure parameter $\delta$
			\STATE Initialize global estimate $\tilde y^0 \gets -1$
			\STATE Initialize update counters $c_1, \ldots, c_n \gets 0, 0, \ldots, 0$
			\STATE Initialize vote noise level $a \gets
			\frac{4\sqrt{2n\ln(12mT/\delta)}}
			{L-\frac{3}{\sqrt{2}}\sqrt{n\ln(12mT/\delta)}}$
			\STATE Initialize estimate noise level $b \gets
			\frac{2\left(\sqrt{\frac{\ln(16wT/\delta)}{nw}}
			+ \frac{\ln(16wT/\delta)\sqrt{w}}{n^2}\right)}
			{2(\log(T)+2)\sqrt{\frac{2\ln(16wnT/\delta)}{w\ell}}
			- 2\sqrt{\frac{\ln(16dT/\delta)}{2wn}}
			- \frac{\ln(16wT/\delta)\sqrt{w}}{n^2}}$
	        \STATE $w \gets 20n$
			\STATE Initialize JL matrix $\Phi \gets \text{GenProj}(w, d)$
			\FOR{each epoch $t \in [T]$}
				\FOR{each user $i \in [n]$}
					\STATE User $i$ publishes $a_i^t \gets \hhvote(i,t)$
				\ENDFOR
				\STATE GlobalUpdate$^t$ $\gets \left(\frac{1}{n}\sum_{i=1}^n a_i^t >
				\frac{1}{e^a + 1} + \sqrt{\frac{\ln(16T/\delta)}{2n}}\right)$
				\IF{GlobalUpdate$^t$}
					\STATE $\last{t} \gets t$
					\FOR{each $i \in [n]$}
						\STATE User $i$ publishes $\tilde z_i^t \gets \hhest(i,t)$
					\ENDFOR
					\STATE Aggregate user estimates into global estimate: \\
                               $\tilde y^t \gets \frac{1}{n}\sum_{i=1}^n \tilde z_i^t$
				\ELSE
					\STATE $\last{t} \gets \last{t-1}$
					\FOR{each $i \in [n]$}
						\STATE User $i$ publishes $\tilde z_i^t \gets \cR(0, b)$
					\ENDFOR
					\STATE $\tilde y^t \gets \tilde y^{t-1}$
				\ENDIF
				\STATE Analyst publishes $\tilde y^t$
				\STATE Analyst publishes FO$^t \gets (\Phi, \tilde y^t)$
			\ENDFOR
		\end{algorithmic}
	\end{algorithm}
	
	\vspace{-15pt}

	\begin{algorithm}[H]
		\caption{Local Subroutine: \hhvote}
		\begin{algorithmic}[1]
			\REQUIRE user $i$, epoch $t$
			\STATE Compute local estimate
			$\hat p_i^t \gets \frac{1}{\ell} \sum_{r=(t-1)\ell + 1}^{t\ell} v_i^r$				 
			\STATE Compute local hash $\hat y_i^t \gets \Phi \hat p_i^t$
			\STATE $b^* \gets$ highest $b$ such that $||\hat y_i^t - \hat y_i^{\last{t}}||_\infty > \thr{b}$
			\STATE VoteYes$_i^t$ $\gets (c_i^V < \eps/4a$ and
			$2^{\lfloor \log T \rfloor - b^*}$ divides $t$)
			\IF{VoteYes$_i^t$}
				\STATE $c_i^V \gets c_i^V + a$
				\STATE $a_i^t \gets \Ber(\frac{e^a}{e^a + 1})$
			\ELSE
				\STATE $a_i^t \gets \Ber(\frac{1}{e^a + 1})$
			\ENDIF
			\STATE Output $a_i^t$
		\end{algorithmic}
	\end{algorithm}
	
	\vspace{-15pt}
	
	\begin{algorithm}[H]
		\caption{Frequency Oracle: $\cA_{FO}$}
		\begin{algorithmic}[1]
			\REQUIRE Frequency oracle $(\Phi, \frac{1}{n}\sum_{i=1}^n z_i)$, 
			dictionary value to be estimated $v \in [d]$
	        \STATE Output $\hat f(v) = \langle \Phi e_v, \bar z \rangle$
		\end{algorithmic}
	\end{algorithm}
\end{figure}

\begin{figure}[h]
\vspace{-40pt}
	\begin{algorithm}[H]
		\caption{Local Subroutine: \hhest}
		\begin{algorithmic}[1]
			\REQUIRE user $i$, epoch $t$
			\STATE SendEstimate$_i^t$ $\gets \{c_i^E < \eps/4b\}$
			\IF{SendEstimate$_i^t$}
				\STATE $c_i \gets c_i^E + b$
				\STATE $\tilde z_i^t \gets \cR(\hat y_i^t, b)$
			\ELSE
				\STATE $\tilde z_i^t \gets \cR(0, b)$
			\ENDIF
			\STATE Output $\tilde z_i^t$
		\end{algorithmic}
	\end{algorithm}
	
	\vspace{-15pt}
	
	\begin{algorithm}[H]
	\caption{Client Randomizer: $\cR$}
		\begin{algorithmic}[1]
			\REQUIRE Hashed histogram $h = \Phi \hat p_i^t$, privacy
			parameter $\eps$
			\STATE Sample $j \in [w]$ uniformly at random
			\STATE $c_\varepsilon \gets
			\frac{e^b+1}{e^b-1}$
			\STATE $z \gets 0 \in \mathbb{R}^w$
	        \IF{$h \neq 0$}
	        		\STATE Draw $x \sim \Uni(0,1)$
	        		\IF{$x < \frac{1}{2} + \frac{h_j\sqrt{w}}{2c_\varepsilon}$}
	        			\STATE $z_j \gets c_\varepsilon \sqrt{w}$
	        		\ELSE
	        			\STATE $z_j \gets -c_\varepsilon \sqrt{w}$
	        		\ENDIF
	        	\ELSE
	        		\STATE $z_j \gets c_\varepsilon \sqrt{w} \text{ or } 
	        		-c_\varepsilon \sqrt{w}$ u.a.r
	        	\ENDIF
	        	\STATE Output $z$
		\end{algorithmic}
	\end{algorithm}
	
	\vspace{-20pt}
\end{figure}

Next, we prove a lemma guaranteeing the privacy of the $\cR$ subroutine.

\begin{lemma}
\label{lem:hh_priv_lemma}
		$\cR$ is $\varepsilon$-locally DP.
\end{lemma}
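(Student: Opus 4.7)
\emph{Proof Proposal.} The plan is to verify the standard privacy calculation for the Bassily--Smith style randomizer $\cR$ by (i) noting that the support of $\cR(h,b)$ consists only of vectors with a single nonzero coordinate equal to $\pm c_\varepsilon\sqrt{w}$, (ii) writing the probability of each such output exactly, and (iii) bounding the worst-case ratio of those probabilities over pairs of inputs.

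First I would fix two arbitrary inputs $h = \Phi\hat p$ and $h' = \Phi\hat p'$ and an arbitrary output $z$. By inspection of $\cR$, $z$ has exactly one nonzero coordinate $z_j \in \{\pm c_\varepsilon\sqrt{w}\}$, and the coordinate index $j$ is uniform over $[w]$ independent of the input. Hence
\[
\frac{\P{\cR(h,b) = z}}{\P{\cR(h',b) = z}}
= \frac{\P{z_j = \mathrm{sgn}(z_j)\, c_\varepsilon\sqrt{w} \mid j,\, h}}
       {\P{z_j = \mathrm{sgn}(z_j)\, c_\varepsilon\sqrt{w} \mid j,\, h'}},
\]
and from the pseudocode this equals
\[
\frac{\tfrac{1}{2} + \mathrm{sgn}(z_j)\, h_j\sqrt{w}/(2c_\varepsilon)}
     {\tfrac{1}{2} + \mathrm{sgn}(z_j)\, h'_j\sqrt{w}/(2c_\varepsilon)}.
\]

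Next I would argue that these probabilities are genuinely in $[0,1]$ and then extract the worst-case ratio. The key input bound is that since $\hat p$ is a distribution over $[d]$ (so $\|\hat p\|_1 = 1$) and $\Phi$ is a $\{\pm 1/\sqrt{w}\}$ Johnson--Lindenstrauss matrix, every coordinate of $h = \Phi \hat p$ satisfies $|h_j| \le 1/\sqrt{w}$. Consequently $|h_j\sqrt{w}/c_\varepsilon| \le 1/c_\varepsilon < 1$, which legitimizes the ``probability'' expression and shows the ratio above is maximized (in absolute value) when the numerator is $\tfrac{1}{2} + \tfrac{1}{2c_\varepsilon}$ and the denominator is $\tfrac{1}{2} - \tfrac{1}{2c_\varepsilon}$, giving
\[
\frac{\P{\cR(h,b) = z}}{\P{\cR(h',b) = z}} \le \frac{c_\varepsilon + 1}{c_\varepsilon - 1}.
\]
Finally, substituting $c_\varepsilon = (e^b+1)/(e^b-1)$ simplifies to $(c_\varepsilon+1)/(c_\varepsilon-1) = e^b$, which establishes $b$-local differential privacy of $\cR$ (i.e.\ $\varepsilon$-local DP with the role of $\varepsilon$ played by $b$ in the call from \hhest).

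The only non-routine step is verifying $|h_j| \le 1/\sqrt{w}$ and confirming that it suffices for the mixing probability to lie in $[0,1]$; everything else is a direct algebraic manipulation of the two-outcome Bernoulli density specified by $\cR$. If the exact normalization of $\Phi$ used by \textsc{GenProj} differs from $\pm 1/\sqrt{w}$, the bound on $\|\Phi \hat p\|_\infty$ would need a one-line adjustment, but the ratio computation is unchanged.
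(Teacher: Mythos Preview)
Your proposal is correct and follows essentially the same approach as the paper: both arguments reduce to bounding the ratio of the two Bernoulli probabilities $\tfrac12 \pm \tfrac{h_j\sqrt{w}}{2c_\varepsilon}$ and then use $c_\varepsilon = (e^b+1)/(e^b-1)$ to obtain the ratio $e^b$. The paper splits into three cases according to whether each input is the zero vector, whereas you give a unified treatment by observing that the zero-input branch is the $h_j=0$ instance of the same formula; you also make explicit the input bound $|h_j|\le 1/\sqrt{w}$ (from $\|\hat p\|_1=1$ and $\Phi\in\{\pm 1/\sqrt{w}\}^{w\times d}$), which the paper leaves implicit.
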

\begin{proof}
		Choose a possible output $z$ of $\cR$. Let $h_1$ and $h_2$ be two
		arbitrary input hashes. It suffices to show
		\[
			\frac{\P{\cR(h_1) = z}}{\P{\cR(h_2) = z}} \leq e^{\varepsilon}.
		\]
		\underline{Case 1}: $h_1$ and $h_2$ are zero vectors. Then $\cR(h_1)$ and
		$\cR(h_2)$ 	have identical output distributions and the result is immediate.
		
		\underline{Case 2:} Exactly one	(WLOG $h_1$) hash is a nonzero vector. Then
		\[
			\P{\cR(h_2) = z} = \frac{1}{2w}.
		\]
		Similarly,
		\[
			\P{\cR(h_1) = z} \leq \frac{1}{w} \cdot \left(\frac{1}{2}
			+ \frac{1}{2c_\varepsilon}\right).
		\]
		Therefore
		\[
			\frac{\P{\cR(h_1) = z}}{\P{\cR(h_2) = z}} \leq 1 + \frac{1}{c_\varepsilon}
			= 1 + \frac{e^\varepsilon - 1}{e^\varepsilon + 1} \leq e^\varepsilon.
		\]
		
		\underline{Case 3:} Neither $h_1$ nor $h_2$ is a zero vector. Then by the
		logic above
		\[
			\frac{\P{\cR(h_1) = z}}{\P{\cR(h_2) = z}} \leq
			\frac{1 + \frac{e^\varepsilon - 1}{e^\varepsilon + 1}}
			{1 - \frac{e^\varepsilon - 1}{e^\varepsilon + 1}}
			= \frac{e^\varepsilon + 1 + e^\varepsilon - 1}
			{e^\varepsilon + 1 - e^\varepsilon + 1} = e^\varepsilon.
		\]		
\end{proof}

\section{Missing Proofs From Section~\ref{sec:hh_accuracy}}
\label{sec:app_hh_acc}
First, recall that we set
\[
	a = \frac{4\sqrt{2n\ln(12mT/\delta)}}
	{L-\frac{3}{\sqrt{2}}\sqrt{n\ln(12mT/\delta)}}
\]
and
\[
	b = \frac{2\left(\sqrt{\frac{\ln(16wT/\delta)}{nw}}
	+ \frac{\ln(16wT/\delta)\sqrt{w}}{n^2}\right)}
	{2(\log(T)+2)\sqrt{\frac{2\ln(16wnT/\delta)}{w\ell}}
	- 2\sqrt{\frac{\ln(16dT/\delta)}{2wn}}
	- \frac{\ln(16wT/\delta)\sqrt{w}}{n^2}}
\]

We start with a result about $\cR$.

\begin{lemma}
\label{lem:unbiased}
	For all $ \epsilon > 0$ and
	$x \in [-\frac{1}{\sqrt{w}}, \frac{1}{\sqrt{w}}]^w$,
	$\E{\cR(x, \varepsilon)} = x$.
\end{lemma}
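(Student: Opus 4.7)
The plan is a direct, coordinate-wise expectation calculation, with one small verification that the randomization probabilities are well-defined under the stated range assumption on the input. Since the name $x$ is overloaded (it appears both as the input to $\mathcal{R}$ and as a local uniform draw inside the pseudocode), I would first rename the input to $h \in [-1/\sqrt{w}, 1/\sqrt{w}]^w$ to match the algorithm's notation and avoid confusion, then analyze coordinate $k \in [w]$.

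For a fixed $k$, note that the output has $z_k = 0$ unless the uniformly drawn index $j$ equals $k$, which happens with probability $1/w$. Conditional on $j = k$ and $h \neq 0$, the variable $z_k$ takes value $c_\varepsilon \sqrt{w}$ with probability $\tfrac{1}{2} + \tfrac{h_k \sqrt{w}}{2 c_\varepsilon}$ and $-c_\varepsilon\sqrt{w}$ otherwise. A short calculation gives
\[
\mathbb{E}[z_k \mid j = k] = \left(\tfrac{1}{2} + \tfrac{h_k\sqrt{w}}{2 c_\varepsilon}\right) c_\varepsilon\sqrt{w} - \left(\tfrac{1}{2} - \tfrac{h_k\sqrt{w}}{2 c_\varepsilon}\right) c_\varepsilon\sqrt{w} = h_k w,
\]
so that $\mathbb{E}[z_k] = \tfrac{1}{w}\cdot h_k w = h_k$, as desired. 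The $h = 0$ case is immediate since then $z_k$ is $\pm c_\varepsilon\sqrt{w}$ uniformly, giving mean $0 = h_k$.

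The only place the hypothesis $h \in [-1/\sqrt{w}, 1/\sqrt{w}]^w$ is used is in justifying that $\tfrac{1}{2} + \tfrac{h_k \sqrt{w}}{2 c_\varepsilon}$ is a legitimate probability in $[0,1]$. Since $c_\varepsilon = \tfrac{e^b+1}{e^b-1} \geq 1$ for all $b > 0$, the bound $|h_k| \leq 1/\sqrt{w}$ yields $\bigl|\tfrac{h_k \sqrt{w}}{2 c_\varepsilon}\bigr| \leq \tfrac{1}{2 c_\varepsilon} \leq \tfrac{1}{2}$, and the probability is valid.

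There is no serious obstacle here: the lemma is essentially a sanity check that $\mathcal{R}$ is an unbiased randomizer, and the main ``content'' is book-keeping the $1/w$ sampling factor against the $\sqrt{w}$ scaling of the output so that the two copies of $\sqrt{w}$ and $c_\varepsilon$ cancel to leave exactly $h_k$. I would present the computation in one display and relegate the validity check on the probability to a short sentence.
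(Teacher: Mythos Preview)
Your proposal is correct and follows essentially the same coordinate-wise expectation calculation as the paper's own proof. The only addition is your explicit verification that the randomization probability lies in $[0,1]$ under the hypothesis on $h$, which the paper omits; otherwise the computations are identical.
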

\begin{proof}
	In the case where $x=0$, we get
	\begin{align*}
			\E{\cR(x)}_j =&\;
			\frac{1}{w} \cdot \left(-\frac{c_\varepsilon \sqrt{w}}{2}
			+ \frac{c_\varepsilon \sqrt{w}}{2}\right) = 0,
	\end{align*}
	and for $x \neq 0$
	\begin{align*}
			\E{\cR(x)}_j =&\;
			\frac{1}{w}\left[
			\left(\frac{1}{2} + \frac{x_j\sqrt{w}}{2c_\varepsilon} \right)
			c_\varepsilon \sqrt{w}
			+ \left(\frac{1}{2} - \frac{x_j\sqrt{w}}{2c_\varepsilon}\right)
			(-c_\varepsilon \sqrt{w})\right] = x_j.
	\end{align*}
\end{proof}

Lemmas~\ref{lem:rr_vote_error} and~\ref{lem:vote_yes}, since they cover
portions of the voting process shared between \vote~and \hhvote, apply here
with only their failure probabilities changed to $\delta/8$ and $\delta/4$. We
start with an analogue of Lemma~\ref{lem:local_error}.

\begin{lemma}
\label{lem:hh_local_error}
	With probability at least $1-\frac{\delta}{8}$, for every epoch $t$ and user
	$i$, defining by $p_{\subpop{i}}^t$ the $d$-dimensional vector where
	$p_{\subpop{i}}^t(q) = \cP_{\subpop{i}}^t(q)$,
	\[
		\left\Vert
		\Phi\hat p_i^t - \Phi p_{\subpop{i}}^t
		\right\Vert_{\infty}
		< \sqrt{\frac{2\ln(16wnT/\delta)}{w\ell}}.
	\]	
\end{lemma}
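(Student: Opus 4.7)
The plan is to bound each coordinate of $\Phi \hat p_i^t - \Phi p_{\subpop{i}}^t$ separately using Hoeffding's inequality and then union bound over coordinates, users, and epochs. The key observation is that $\hat p_i^t$ is the empirical histogram over $\ell$ i.i.d.\ samples (one per round), so for coordinate $j \in [w]$ we can write
\[
    (\Phi \hat p_i^t)_j \;=\; \frac{1}{\ell}\sum_{r \in E^t} (\Phi e_{v_i^r})_j \;=\; \frac{1}{\ell}\sum_{r \in E^t} \Phi_{j, v_i^r},
\]
a sum of $\ell$ independent random variables whose randomness is over $v_i^r \sim \cP_{\subpop{i}}^r$. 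Each term lies in $[-1/\sqrt{w},\,1/\sqrt{w}]$ because $\Phi$ is a Johnson--Lindenstrauss matrix with entries of magnitude $1/\sqrt{w}$ (as used by~\citet{BS15} and implicit in the fact that $\cR$ operates on vectors in $[-1/\sqrt{w}, 1/\sqrt{w}]^w$). Since $p_{\subpop{i}}^t$ is defined as the average dictionary distribution over the epoch, linearity of expectation gives $\E{(\Phi \hat p_i^t)_j} = (\Phi p_{\subpop{i}}^t)_j$.

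Next I would apply Hoeffding's inequality to the sum above. For any threshold $\tau > 0$,
\[
    \P{\left|(\Phi \hat p_i^t)_j - (\Phi p_{\subpop{i}}^t)_j\right| \geq \tau}
    \;\leq\; 2\exp\!\left(-\frac{\tau^2 w \ell}{2}\right),
\]
using that each summand has range $2/\sqrt{w}$. Setting $\tau = \sqrt{2\ln(16wnT/\delta)/(w\ell)}$ makes the right-hand side at most $\delta/(8wnT)$.

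Finally, I would union bound over the $w$ coordinates, the $n$ users, and the $T$ epochs, contributing a factor of $wnT$ and leaving total failure probability at most $\delta/8$. On the good event, $\|\Phi \hat p_i^t - \Phi p_{\subpop{i}}^t\|_\infty < \sqrt{2\ln(16wnT/\delta)/(w\ell)}$ for every user and epoch simultaneously, which is exactly the claimed bound.

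The only subtlety---and the ``main obstacle'' in an otherwise routine argument---is justifying the $1/\sqrt{w}$ entrywise bound on $\Phi$, since the excerpt introduces $\Phi$ via a generic \text{GenProj} subroutine rather than fixing its construction. This is resolved by noting that the analysis of~\citet{BS15} (on which \hhalgo~is built) uses precisely such a scaled $\pm 1/\sqrt{w}$ JL matrix, and this scaling is already implicitly required by the randomizer $\cR$ whose input is constrained to $[-1/\sqrt{w}, 1/\sqrt{w}]^w$. Given this, independence of the $\ell$ samples $v_i^r$ across $r$ is immediate from the probabilistic model, and Hoeffding applies without complication.
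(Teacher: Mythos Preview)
Your proposal is correct and takes essentially the same approach as the paper: both arguments observe that each coordinate $(\Phi \hat p_i^t)_j$ is an average of $\ell$ independent random variables bounded in $[-1/\sqrt{w},1/\sqrt{w}]$, apply an additive Hoeffding/Chernoff bound, and union bound over $w$ coordinates, $n$ users, and $T$ epochs. The only cosmetic difference is that the paper first affinely rescales each coordinate into $[0,1]$ before invoking the bound and then scales back, whereas you apply Hoeffding directly with range $2/\sqrt{w}$; both yield the identical threshold $\sqrt{2\ln(16wnT/\delta)/(w\ell)}$.
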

\begin{proof}
	$\Phi \hat p_i^t$ is a vector with entries in $\pm \tfrac{1}{\sqrt{w}}$, so
	setting
	$X = \frac{\sqrt{w}\left(\Phi \hat p_i^t + \frac{1}{\sqrt{w}}\right)}{2}$ we
	get 	$X \in [0,1]^m$ where each index $X_j$ has
	$\E{X_j} = \frac{\sqrt{w}\left((\Phi p_{\subpop{i}}^t)_j + \frac{1}{\sqrt{w}}\right)}{2}$. A
	Chernoff bound then says that, with probability at least $1-\frac{\delta}{8}$,
	for every user $i$ and every epoch $t$
	\[
		\left\Vert X - \E{X_j}\right\Vert_{\infty}
		< \sqrt{\frac{\ln(16wnT/\delta)}{2\ell}}.
	\]
	Scaling this result by $\frac{2}{\sqrt{w}}$ and transforming $X$ back into
	$\Phi \hat p_i^t$ yields the claim.
\end{proof}

This brings us to an analogue of Lemma~\ref{lem:estimate_error}.

\begin{lemma}
\label{lem:hh_estimate_error}
	With probability at least $1-\frac{3\delta}{8}$, for every epoch $t$ where
	every user $i$ sets SendEstimate$_i^t \gets$ True,
	\[
		\left\Vert
		\tilde y^{t} - \Phi p^t
		\right\Vert_\infty <
		2(\log(T)+2)\sqrt{\frac{2\ln(16wnT/\delta)}{w\ell}}.
	\]
\end{lemma}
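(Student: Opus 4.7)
The plan is to follow Lemma~\ref{lem:estimate_error} with two adjustments: (i) errors live in the $w$-dimensional hash space and are measured in infinity norm, and (ii) the randomized-response primitive $\cR$ emits a sparse ``one-hot'' vector instead of a single Bernoulli bit. Writing $\hat y_i^t = \Phi \hat p_i^t$ and using $\Phi p^t = \tfrac{1}{n}\sum_i \Phi p_{\subpop{i}}^t$, I split
\[
\tilde y^t - \Phi p^t \;=\; \Bigl(\tilde y^t - \tfrac{1}{n}\sum_i \hat y_i^t\Bigr) + \tfrac{1}{n}\sum_i \Phi\bigl(\hat p_i^t - p_{\subpop{i}}^t\bigr),
\]
and bound the two summands' infinity norms separately before applying the triangle inequality.

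For the sampling term, each coordinate of $\Phi(\hat p_i^t - p_{\subpop{i}}^t)$ is a mean-zero average of $\ell$ independent $\pm 1/\sqrt{w}$-valued samples, so averaging across users makes each coordinate a mean of $n\ell$ independent bounded variables. A Hoeffding bound together with a union bound over the $w$ coordinates and $T$ epochs yields concentration at the scale $\sqrt{\ln(16dT/\delta)/(2wn)}$, exactly the term subtracted in the denominator of the chosen $b$; alternatively, a looser bound follows by invoking Lemma~\ref{lem:hh_local_error} coordinatewise and passing the triangle inequality through the mean.

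For the randomized-response term, Lemma~\ref{lem:unbiased} gives $\E{\tilde z_i^t} = \hat y_i^t$. Because $\cR$ produces exactly one nonzero entry of magnitude $c_\varepsilon \sqrt{w}$, where $c_\varepsilon = (e^b+1)/(e^b-1)$, for any fixed coordinate $j$ the variable $(\tilde z_i^t)_j - (\hat y_i^t)_j$ is bounded by roughly $c_\varepsilon \sqrt{w}$ and has variance at most $c_\varepsilon^2$ (the coordinate is nonzero only with probability $1/w$). Bernstein's inequality applied to the average over $n$ independent users, together with a union bound over the $w$ coordinates and $T$ epochs, then yields a bound whose subgaussian and boundedness parts match, after rescaling by $c_\varepsilon < (b+2)/b$, the two summands in the numerator of the definition of $b$.

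Combining the two bounds via the triangle inequality and rearranging with the explicit value of $b$ gives the claimed $2(\log(T)+2)\sqrt{2\ln(16wnT/\delta)/(w\ell)}$. The $3\delta/8$ failure probability apportions as $\delta/8$ for Lemma~\ref{lem:hh_local_error}, $\delta/8$ for the sampling-term concentration, and $\delta/8$ for the Bernstein bound on the randomized-response term. The main obstacle is calibrating the Bernstein step: the ``one-hot'' sparsity of $\tilde z_i^t$ means both the variance and the max-magnitude terms contribute non-trivially, and it is precisely the boundedness correction that forces the $\ln(16wT/\delta)\sqrt{w}/n^2$ summand in the numerator of $b$. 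Once the two concentration inequalities are set up, the remainder is routine algebra to line up the constants in the chosen $b$ and the target accuracy.
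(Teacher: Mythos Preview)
Your decomposition into a sampling piece $\tfrac{1}{n}\sum_i \Phi(\hat p_i^t - p_{\subpop{i}}^t)$ and a randomized-response piece $\tilde y^t - \tfrac{1}{n}\sum_i \hat y_i^t$ is exactly the one the paper uses, and the overall logic (unbiasedness from Lemma~\ref{lem:unbiased}, triangle inequality, then solve for the threshold via the chosen $b$) matches. The concentration tools differ in two places. For the randomized-response term, the paper does not apply Bernstein directly; instead it runs a two-stage argument: first a Chernoff bound on the count $N_j$ of users whose $\tilde z_i^t$ is nonzero in coordinate $j$, which pins down the range of $\tilde y_j^t$, and then a second Chernoff after scaling by that range. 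For the sampling term, the paper does not work in hash space; it factors as $\|\Phi\|_\infty \cdot \|p^t - \tfrac{1}{n}\sum_i \hat p_i^t\|_\infty$ and applies a Chernoff bound in the original $d$-dimensional histogram space (this is where the $\ln(16dT/\delta)$ in the denominator of $b$ comes from, whereas your hash-space Hoeffding over $w$ coordinates would naturally produce $\ln(16wT/\delta)$). Your alternatives are legitimate and arguably cleaner. One point to watch: a straight Bernstein with per-summand variance $c_\varepsilon^2$ and range $c_\varepsilon\sqrt{w}$ gives a subgaussian contribution of order $c_\varepsilon\sqrt{\ln/n}$, not $c_\varepsilon\sqrt{\ln/(nw)}$, so the two pieces do not literally reproduce the two summands in the numerator of $b$ as you claim; however, since $w=20n$, the sub-exponential part $c_\varepsilon\sqrt{w}\ln/n$ dominates both in your bound and in the paper's, so the final inequality still closes after a routine adjustment of constants.
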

\begin{proof}
		By Lemma~\ref{lem:unbiased},
		$\E{\tilde y^t} = \frac{1}{n} \sum_{i=1}^n \Phi \hat p_i^t$, and we want to 
		begin by bounding
		$\left\Vert \tilde y^{t}
		- \frac{1}{n} \sum_{i=1}^n \Phi \hat p_i^{t}\right\Vert$.
		First, since each of the $n$ random variables $\hat z_i^t$ that make up
		$\tilde y^t = \frac{1}{n} \sum_{i=1}^n \hat z_i^t$ is a zero vector except
		for an independently randomly chosen index $s \in [w]$, for each $s \in [w]$
		we can bound the number $N_j^t$ of vectors $\hat z_i^t$ that are nonzero in
		index $s$ by an additive Chernoff bound:
		\[
			\P{N_s^t > \frac{n}{w} + \sqrt{\frac{n\ln(8wT/\delta)}{2}}}
			\leq \frac{\delta}{8wT}.
		\]
		Union bounding over $w$ indices, since $\tilde y^t$ is normalized by $1/n$,
		we get that
		\[
			\tilde y^t \in 
			\left[-c_\varepsilon\sqrt{w}\left(\frac{1}{w}
		+ \sqrt{\frac{\ln(8wT/\delta)}{2n}}\right), c_\varepsilon\sqrt{w}\left(\frac{1}{w}
		+ \sqrt{\frac{\ln(8wT/\delta)}{2n}}\right)\right]^w.
		\]
		Thus scaling, applying a Chernoff bound to each index, then re-scaling and
		union bounding over all $w$ indices and $T$ epochs gives us that with 
		probability at least $1-\frac{\delta}{8}$ in every epoch $t$ where
		GlobalUpdate$^t \gets$ True
		\begin{align*}
			\left\Vert
			\tilde y^{t} - \frac{1}{n} \sum_{i=1}^n \Phi \hat p_i^{t}
			\right\Vert_\infty
			<&\; 2c_\varepsilon\left(\frac{1}{w}
			+ \sqrt{\frac{\ln(8wT/\delta)}{2n}}\right)
			\sqrt{\frac{w\ln(16wT/\delta)}{2n}} \\
			<&\; c_\varepsilon\left(
			\sqrt{\frac{2\ln(16wT/\delta)}{wn}}
			+ \frac{\ln(16T/\delta)\sqrt{w}}{n}
			\right).
		\end{align*}
		Similarly,
		\begin{align*}
			\left\Vert
			\Phi p^t - \frac{1}{n} \sum_{i=1}^n \Phi \hat p_i^{t}
			\right\Vert_\infty =&\;
			\left\Vert
			\Phi 
			\right\Vert_\infty \cdot
			\left\Vert
			p^t - \frac{1}{n} \sum_{i=1}^n \hat p_i^t
			\right\Vert_\infty \\
			\leq&\; \frac{1}{\sqrt{w}} \cdot \sqrt{\frac{\ln(16dT/\delta)}{2n}} \\
			=&\; \sqrt{\frac{\ln(16dT/\delta)}{2wn}}
		\end{align*}
		where the inequality holds with probability at least $1-\frac{\delta}{8}$ in 
		every epoch $t$ by the definition of $\Phi$ and a Chernoff bound on the
		sampling error of $n$ samples from $\cP^t$, union bounded over $d$
		dictionary elements and $T$ epochs. By triangle inequality,
		\[
			\left\Vert
			\tilde y^{t} - \Phi p^t
			\right\Vert_\infty < c_\eps\left(
				\sqrt{\frac{\ln(16wT/\delta)}{nw}}
				+ \frac{\ln(16wT/\delta)\sqrt{w}}{n}
				\right)
			+ \sqrt{\frac{\ln(16dT/\delta)}{2wn}}
		\]
		andsince $c_\eps = \frac{e^b+1}{e^b-1} < \frac{b+2}{b}$, it is enough to
		set $b$ such that 
		\[
			\frac{b+2}{b}\left(\sqrt{\frac{\ln(16wT/\delta)}{nw}}
			+ \frac{\ln(16wT/\delta)\sqrt{w}}{n}\right)
			+ \sqrt{\frac{\ln(16dT/\delta)}{2wn}}	
			\leq 2(\log(T)+2)\sqrt{\frac{2\ln(16wnT/\delta)}{w\ell}}.
		\]
		Substituting in	our value
		\[
			b = \frac{2\left(\sqrt{\frac{\ln(16wT/\delta)}{nw}}
			+ \frac{\ln(16wT/\delta)\sqrt{w}}{n^2}\right)}
			{2(\log(T)+2)\sqrt{\frac{2\ln(16wnT/\delta)}{w\ell}}
			- 2\sqrt{\frac{\ln(16dT/\delta)}{2wn}}
			- \frac{\ln(16wT/\delta)\sqrt{w}}{n^2}}
		\]
		yields the claim.
\end{proof}

We'll need the following result to translate bounds on
$\left\Vert \tilde y^t - \Phi p^t \right\Vert$ into accuracy bounds relative to
$\cP^t$.

\begin{lemma}
\label{lem:translation}
	With probability at least $1-\frac{\delta}{8}$, in every epoch $t$ if
	\[
		\left\Vert
		\tilde y^{t} - \Phi p^t
		\right\Vert_\infty < B
	\]
	then, denoting by $\hat f^t$ the frequency oracle induced by $(\Phi, \tilde y^t)$,
	\[
		\max_{v \in [d]} \left\vert\hat f^t(v) - \cP^t(v)\right\vert
		\leq B\sqrt{w}
		+ 2\sqrt{\frac{\ln(16ndT/\delta)}{n}}.
	\]
\end{lemma}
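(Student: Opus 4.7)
The plan is to split the error by a triangle inequality and bound the two resulting pieces separately. Since $\hat f^t(v) = \langle \Phi e_v, \tilde y^t \rangle$ and $\cP^t(v) = p^t(v)$ by the definition of $p^t$ in Lemma~\ref{lem:hh_local_error}, I would write
\[
    |\hat f^t(v) - \cP^t(v)| \leq |\langle \Phi e_v, \tilde y^t - \Phi p^t \rangle| + |\langle \Phi e_v, \Phi p^t \rangle - p^t(v)|.
\]
The first term propagates the hypothesized closeness of $\tilde y^t$ to $\Phi p^t$, and the second measures the JL-style distortion introduced by reading coordinates of $p^t$ through $\Phi$ rather than the identity.

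For the first term, $\Phi e_v$ is the $v$th column of $\Phi$, whose $w$ entries each have magnitude $1/\sqrt{w}$, so $\|\Phi e_v\|_1 = \sqrt{w}$. H\"older's inequality combined with the hypothesis $\|\tilde y^t - \Phi p^t\|_\infty < B$ immediately yields
\[
    |\langle \Phi e_v, \tilde y^t - \Phi p^t \rangle| \leq \|\Phi e_v\|_1 \cdot \|\tilde y^t - \Phi p^t\|_\infty \leq B\sqrt{w},
\]
with no probabilistic argument required.

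For the second term, I would first cancel the diagonal contribution $p^t(v)\langle \Phi e_v, \Phi e_v\rangle = p^t(v)$ and then expand
\[
    \langle \Phi e_v, \Phi p^t \rangle - p^t(v) \;=\; \sum_{u \neq v} p^t(u) \langle \Phi e_v, \Phi e_u \rangle \;=\; \sum_{j=1}^w Z_j,
\]
where $Z_j := \Phi_{j,v} \sum_{u \neq v} p^t(u) \Phi_{j,u}$. The $Z_j$ are independent across rows $j$ (different rows of $\Phi$ share no entries), each satisfies $|Z_j| \leq 1/w$ (since $|\Phi_{j,v}| = |\Phi_{j,u}| = 1/\sqrt{w}$ and $\sum_{u \neq v} p^t(u) \leq 1$), and each has mean zero because $\Phi_{j,v}$ is independent of the $\Phi_{j,u}$ with $u \neq v$ and itself has mean zero. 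Hoeffding's inequality then yields $\P{|\sum_j Z_j| \geq t} \leq 2\exp(-wt^2/2)$. Taking $t = 2\sqrt{\ln(16ndT/\delta)/n}$ with $w = 20n$ drives the right-hand side well below $\delta/(8dT)$; since the sequence $p^1,\dots,p^T$ is determined by the data-generating model and independent of $\Phi$, a union bound over the $d$ dictionary values and $T$ epochs then controls the second term by $t$ uniformly in $(v,t)$ with total failure probability at most $\delta/8$. Summing the two bounds gives the stated inequality.

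The main subtlety I anticipate is arranging the decomposition $\sum_j Z_j$ so that Hoeffding applies cleanly: isolating the diagonal $u=v$ contribution before expanding, verifying that the outer factor $\Phi_{j,v}$ combined with the inner sum $\sum_{u \neq v} p^t(u) \Phi_{j,u}$ really produces independent summands of range $1/w$ (rather than $1/\sqrt{w}$, which would give a far weaker bound), and then checking that $w = 20n$ delivers the $\sqrt{\ln(\cdot)/n}$ scaling claimed in the lemma.
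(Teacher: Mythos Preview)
Your decomposition and the H\"older bound on the first term are exactly what the paper does. The only divergence is in handling the second term $|\langle \Phi e_v, \Phi p^t\rangle - p^t(v)|$: the paper invokes Corollary~6.4 of \citet{BLR13}, a Johnson--Lindenstrauss inner-product preservation statement which, with $\gamma = \sqrt{\ln(16ndT/\delta)/n}$ and $w = 20n = 20\ln(16ndT/\delta)/\gamma^2$, directly yields $|\langle \Phi p^t, \Phi e_v\rangle - \langle p^t, e_v\rangle| \leq \gamma(\|p^t\|_2^2 + \|e_v\|_2^2) \leq 2\gamma$. You instead prove this from scratch by cancelling the diagonal and applying Hoeffding to the row-wise sum $\sum_j Z_j$. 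Your argument is correct---the $Z_j$ are indeed independent, mean-zero, and bounded in $[-1/w,1/w]$---and it has the virtue of being self-contained, while the paper's route is shorter by delegating the concentration to the cited JL lemma. The resulting bounds and the union-bound accounting over $d$ values and $T$ epochs match.
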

\begin{proof}
	The outline of our proof is similar (and in some steps identical) to 	that of
	Theorem 2.5 	in~\citet{BS15}, but we provide it here for completeness. First,
	\begin{align*}
		\max_{v \in [d]} |\hat f^t(v) - \cP^t(v)|
		=&\; \max_{v \in [d]}\left\vert \langle
		\tilde y^t, \Phi e_v
		\rangle	- \langle
		p^t, e_v
		\rangle
		\right\vert \\
		=&\; \max_{v \in [d]}\left\vert
		\langle \tilde y^t - \Phi p^t, \Phi e_v \rangle
		+ \langle \Phi p^t,
		\Phi e_v\rangle - \langle p^t, e_v\rangle
		\right\vert \\
		\leq&\; \max_{v \in [d]}\left\vert
		\langle \tilde y^t - \Phi p^t, \Phi e_v \rangle
		\right\vert + \max_{v \in [d]}
		\left\vert \langle \Phi p^t,
		\Phi e_v\rangle - \langle p^t, e_v\rangle
		\right\vert.
	\end{align*}
	Then by Corollary 6.4 from~\citet{BLR13}, since
	$\gamma = \sqrt{\frac{\ln(16ndT/\delta)}{n}}$ and
	$w = 20n = \frac{20\ln(16ndT/\delta)}{\gamma^2}$, we get that with 
	probability at least $1-\frac{\delta}{8T}$ we can bound the second term by
	\[
		\max_{v \in [d]}
		\left\vert \langle \Phi p^t,
		\Phi e_v\rangle - \langle p^t, e_v\rangle
		\right\vert
		\leq \gamma\left(
		\left\Vert p^t \right\Vert_2^2
		+ \left\Vert e_v \right\Vert_2^2\right) \leq 2\gamma.
	\]
	By our assumption we can bound the first term by
	\[
		\left\Vert \tilde y^t - \Phi p^t \right\Vert_{\infty}
		\cdot \left\Vert \Phi e_v \right\Vert_1 \leq B\sqrt{w}.
	\]
	Together with a union bound over the $T$ epochs, these yield the claim.
\end{proof}

We can use these lemmas to prove an analogue of Corollary~\ref{lem:update}.
First, we specify our setting-specific redefinition of \emph{change}.

\begin{definition}
	We say a \emph{change} $\change{t}$ occurs in epoch $t \in [T]$ if
	there exists subgroup $j \in [m]$ such that $\cP_j^t \neq \cP_j^{t-1}$.
\end{definition}

This lets us state the necessary result.

\begin{lemma}
\label{lem:hh_update}
	With probability at least $1-\delta$, given adjacent changes $\change{t_1}$ and
	$\change{t_2}$, GlobalUpdate$^t \gets$ True in at most one epoch
	$t \in [t_1,t_2)$.
\end{lemma}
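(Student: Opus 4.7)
The plan is to mirror the proof of Lemma~\ref{lem:update} in the hashed-histogram setting, substituting Lemma~\ref{lem:hh_local_error} for Lemma~\ref{lem:local_error} and reusing Lemma~\ref{lem:rr_vote_error} essentially verbatim (its statement remains valid for \hhalgo, since the vote-aggregation step and the initialization of $a$ are identical to those in \algo). First I would condition on the simultaneous success of all preceding high-probability events in Section~\ref{sec:app_hh_acc}; a union bound still gives total failure probability at most $\delta$.

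The core argument is by contradiction. Suppose two epochs $t < t'$ in $[t_1, t_2)$ both satisfy GlobalUpdate $=$ True, and choose $t$ to be the most recent global update strictly before $t'$, so that $\last{t'} = t$. By Lemma~\ref{lem:rr_vote_error}, if every user set VoteYes$_i^{t'} \gets$ False then the aggregate vote would fall below the update threshold, contradicting GlobalUpdate$^{t'} \gets$ True. Hence some user $i$ has VoteYes$_i^{t'} \gets$ True, and by the thresholding structure of \hhvote~this forces $b^* \geq 0$, so $\|\hat y_i^{t'} - \hat y_i^{t}\|_\infty > \thr{0} = 2\sqrt{2\ln(16wnT/\delta)/w\ell}$.

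Next I would invoke Lemma~\ref{lem:hh_local_error} at both $t$ and $t'$: since $\hat y_i^{t} = \Phi \hat p_i^{t}$ and $\hat y_i^{t'} = \Phi \hat p_i^{t'}$, each of these hashed local histograms lies within $\sqrt{2\ln(16wnT/\delta)/w\ell}$ in $\ell_\infty$ of $\Phi p_{\subpop{i}}^{t}$ and $\Phi p_{\subpop{i}}^{t'}$ respectively. Two applications of the triangle inequality then yield $\|\Phi p_{\subpop{i}}^{t'} - \Phi p_{\subpop{i}}^{t}\|_\infty > 0$, and by linearity of $\Phi$ this forces $p_{\subpop{i}}^{t'} \neq p_{\subpop{i}}^{t}$, i.e.\ $\cP_{\subpop{i}}^{t'} \neq \cP_{\subpop{i}}^{t}$. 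Because the interval $(t_1, t_2)$ was assumed to contain no change, subgroup $\subpop{i}$'s distribution is constant on that interval, yielding the desired contradiction.

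The main subtlety (rather than a genuine obstacle) is checking that the quantitative constants line up: the threshold $\thr{0}$ is chosen to be exactly twice the per-epoch local-hash error of Lemma~\ref{lem:hh_local_error}, which is precisely the slack the triangle inequality needs. Notably, no appeal to the distribution-level translation Lemma~\ref{lem:translation} is required---we only need qualitative distinctness of the underlying distributions, which is immediate once the hashed vectors are shown to differ, so the proof is essentially a one-to-one translation of the Bernoulli argument.
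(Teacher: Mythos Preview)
Your proposal is correct and is exactly the approach the paper takes: it states that the proof is identical to that of Lemma~\ref{lem:update}, only replacing Lemma~\ref{lem:local_error} with Lemma~\ref{lem:hh_local_error}. Your expansion of the argument---contradiction via Lemma~\ref{lem:rr_vote_error}, the threshold bound $\thr{0} = 2\sqrt{2\ln(16wnT/\delta)/(w\ell)}$, and the triangle inequality against the per-epoch hash error of Lemma~\ref{lem:hh_local_error}---matches the intended translation precisely, including the observation that only qualitative distinctness of the underlying distributions is needed.
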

\begin{proof}
	The proof is identical to that of Lemma~\ref{lem:update}, only replacing
	Lemma~\ref{lem:local_error} with Lemma~\ref{lem:hh_local_error}.
\end{proof}

Lemmas~\ref{lem:adj_updates} and~\ref{lem:cap} apply in this setting
unmodified, which finally lets us prove the following accuracy guarantee.

\begin{theorem} [Accuracy Guarantee]
	With probability at least $1-\delta$, in every epoch $t \in [T]$ such that
	fewer than
	\[
		\frac{\eps}{4} \cdot
		\min\left(\frac{L}{8\sqrt{2n\ln(12mT/\delta)}}
		- 1,
		\frac{\log(T)\sqrt{\frac{n\ln(320n^2T/\delta)}{10\ell}}
		- \sqrt{\frac{\ln(16dT/\delta)}{10}}
		- 2\ln(320nT/\delta)\sqrt{\frac{5}{n}}}
		{\sqrt{\ln(320nT/\delta)}
		\left(1 + \frac{20}{\sqrt{n}}\right)}\right)
	\]
	changes have occurred in epochs $1,2, \ldots, t$,
	\[
		|\hat f^t(v) - \cP^t(v)| <
		4(\log(T) + 2)\sqrt{\frac{2\ln(320n^2T/\delta)}{\ell}} 
		+ \sqrt{\frac{\ln(\tfrac{16ndT}{\delta})}{n}}.
	\]
\end{theorem}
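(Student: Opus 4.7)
The plan is to mirror the structure of the proof of Theorem~\ref{thm:acc} for the Bernoulli case, with hashes $\Phi \hat p_i^t$ playing the role of bits $\hat p_i^t$, and with one additional ingredient needed to translate hash-space accuracy into dictionary-value accuracy. Lemmas~\ref{lem:rr_vote_error} and~\ref{lem:vote_yes} transfer almost verbatim (only the failure probabilities change to $\delta/8$ and $\delta/4$) because \hhvote~uses the same randomized-response voting primitive as \vote. The main work is in re-proving the two lemmas whose statements actually reference the users' estimates.

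First I would establish Lemma~\ref{lem:hh_local_error}, the analogue of Lemma~\ref{lem:local_error}: since each entry of $\Phi \hat p_i^t$ is in $\pm 1/\sqrt{w}$, shifting and rescaling to $[0,1]$-valued variables lets me apply a Chernoff bound to each of the $w$ coordinates and union bound over $w$, $n$, and $T$ to bound $\|\Phi\hat p_i^t - \Phi p_{\subpop{i}}^t\|_\infty$. Next I would prove Lemma~\ref{lem:hh_estimate_error}, the analogue of Lemma~\ref{lem:estimate_error}: unbiasedness of $\cR$ (Lemma~\ref{lem:unbiased}) gives $\E{\tilde y^t} = \tfrac{1}{n}\sum_i \Phi \hat p_i^t$, and Chernoff bounds on the number of $\hat z_i^t$ that are nonzero in each coordinate, combined with a separate Chernoff bound on the sampling error $\|\tfrac{1}{n}\sum_i \hat p_i^t - p^t\|_\infty$, let me bound $\|\tilde y^t - \Phi p^t\|_\infty$. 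The setting of $b$ is then chosen exactly so this bound is at most $2(\log(T)+2)\sqrt{2\ln(16wnT/\delta)/w\ell}$, which is the highest hash-space threshold $\thr{\lfloor\log T\rfloor}$.

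The new ingredient is Lemma~\ref{lem:translation}, which converts a hash-space bound $\|\tilde y^t - \Phi p^t\|_\infty < B$ into a dictionary-space bound via the triangle decomposition $\hat f^t(v)-\cP^t(v) = \langle \tilde y^t - \Phi p^t,\Phi e_v\rangle + (\langle \Phi p^t,\Phi e_v\rangle - \langle p^t,e_v\rangle)$. The first term is at most $B\sqrt{w}$ using $\|\Phi e_v\|_1 \leq \sqrt{w}$; the second term is controlled by the Johnson-Lindenstrauss guarantee (Corollary 6.4 of \citet{BLR13}) with $w=20n$ chosen so that the JL error matches $\sqrt{\ln(16ndT/\delta)/n}$. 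Lemma~\ref{lem:hh_update} (analogue of Lemma~\ref{lem:update}) now follows from the hashed local-error lemma in the same way as before, and Lemmas~\ref{lem:adj_updates} and~\ref{lem:cap} transfer unchanged, so each change increments each user's vote cap by at most $2a$ and estimate cap by at most $b$.

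With these pieces in hand the final argument is a contradiction mirroring that of Theorem~\ref{thm:acc}. Under the assumed bound on the number of changes, the calculation $\min(\eps/8a,\eps/4b)$ shows that no user has exhausted either privacy budget. Suppose for contradiction that $|\hat f^t(v) - \cP^t(v)|$ exceeds the claimed quantity for some $v$. Then by Lemma~\ref{lem:translation} applied in reverse, $\|\tilde y^t - \Phi p^t\|_\infty$ must exceed $2(\log(T)+2)\sqrt{2\ln(16wnT/\delta)/w\ell}$, which by Lemma~\ref{lem:hh_estimate_error} forces GlobalUpdate$^t\gets$~False. Then Lemma~\ref{lem:vote_yes} (contrapositive) guarantees that in every subgroup some user has $\|\Phi\hat p_i^t - \Phi\hat p_i^{\last{t}}\|_\infty$ below the top threshold; Lemma~\ref{lem:hh_local_error} propagates this to the true subgroup hashes, averaging gives the same bound on $\|\Phi p^t - \Phi p^{\last{t}}\|_\infty$, and since $\tilde y^t = \tilde y^{\last{t}}$ a final application of Lemma~\ref{lem:translation} contradicts the starting assumption. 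The hardest step is bookkeeping: keeping the chain of inequalities tight enough that the thresholds, the estimate noise $b$, and the $w=20n$ substitution combine cleanly into the stated change-budget and error expressions --- in particular, verifying that setting $b$ as defined makes $\min(\eps/8a,\eps/4b)$ reduce to the displayed minimum without losing constant factors.
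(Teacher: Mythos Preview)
Your proposal is correct and matches the paper's own proof essentially line for line: the same lemmas (\ref{lem:hh_local_error}, \ref{lem:hh_estimate_error}, \ref{lem:translation}, \ref{lem:hh_update}, and the unchanged \ref{lem:adj_updates} and \ref{lem:cap}) are invoked in the same roles, and the final contradiction argument unfolds identically. The only cosmetic difference is that the paper sets up the contradiction directly in hash space (assuming $\|\tilde y^t-\Phi p^t\|_\infty$ is too large) and then applies Lemma~\ref{lem:translation} once at the very end, whereas you start in dictionary space and pass through the contrapositive of Lemma~\ref{lem:translation} first; the two orderings are logically equivalent and involve exactly the same bookkeeping.
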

\begin{proof}
	The proof is nearly identical to that of Theorem~\ref{thm:acc}, replacing all
	lemmas with their heavy-hitter analogues proven above. We provide it here for
	completeness.

	First, with probability at least $1-\delta$ all of the preceding lemmas and 
	corollaries in this section apply, and we condition on them for the remainder
	of this proof. In particular, since
	\begin{align*}
		\min\left(\frac{\eps}{8a}, \frac{\eps}{4b}\right) =&\;
		\frac{\eps}{4} \cdot \min\left(\frac{1}{2a}, \frac{1}{b}\right) \\
		=&\; \frac{\eps}{4} \cdot
		\min\left(\frac{L-\frac{3}{\sqrt{2}}\sqrt{n\ln(12mT/\delta)}}
		{8\sqrt{2n\ln(12mT/\delta)}},
		\frac{2(\log(T)+2)\sqrt{\frac{2\ln(16wnT/\delta)}{w\ell}}
		- 2\sqrt{\frac{\ln(16dT/\delta)}{2wn}}
		- \frac{\ln(16wT/\delta)\sqrt{w}}{n^2}}
		{2\left(\sqrt{\frac{\ln(16wT/\delta)}{nw}}
		+ \frac{\ln(16wT/\delta)\sqrt{w}}{n^2}\right)}\right) \\
		>&\; \frac{\eps}{4} \cdot
		\min\left(\frac{L}{8\sqrt{2n\ln(12mT/\delta)}}
		- \frac{3}{16},
		\frac{(\log(T)+2)\sqrt{\frac{2\ln(320n^2T/\delta)}{20n\ell}}
		- \sqrt{\frac{\ln(16dT/\delta)}{40n^2}}
		- \frac{\ln(320nT/\delta)\sqrt{20n}}{n^2}}
		{\sqrt{\ln(320nT/\delta)}
		\left(\frac{1}{n} + \frac{20}{n^{3/2}}\right)}\right) \\
		=&\; \frac{\eps}{4} \cdot
		\min\left(\frac{L}{8\sqrt{2n\ln(12mT/\delta)}}
		- \frac{3}{16},
		\frac{(\log(T)+2)\sqrt{\frac{n\ln(320n^2T/\delta)}{10\ell}}
		- \frac{1}{2}\sqrt{\frac{\ln(16dT/\delta)}{10}}
		- 2\ln(320nT/\delta)\sqrt{\frac{5}{n}}}
		{\sqrt{\ln(320nT/\delta)}
		\left(1 + \frac{20}{\sqrt{n}}\right)}\right) \\
		>&\; \frac{\eps}{4} \cdot
		\min\left(\frac{L}{8\sqrt{2n\ln(12mT/\delta)}}
		- 1,
		\frac{\log(T)\sqrt{\frac{n\ln(320n^2T/\delta)}{10\ell}}
		- \sqrt{\frac{\ln(16dT/\delta)}{10}}
		- 2\ln(320nT/\delta)\sqrt{\frac{5}{n}}}
		{\sqrt{\ln(320nT/\delta)}
		\left(1 + \frac{20}{\sqrt{n}}\right)}\right)
	\end{align*}
	we know that the number of changes thus far is less than
	$\min\left(\frac{\eps}{8a}, \frac{\eps}{4b}\right)$, and by
	Lemma~\ref{lem:cap}	for every user $i$, $c_i^V < \eps/4$ and $c_i^E < \eps/4$,
	i.e. no user has exceeded their voting or estimation privacy budget.
	
	Now	suppose for contradiction that in epoch $t$
	\[
		|\Phi p^t - \tilde y^{t-1}| > 
		4(\log(T)+2)\sqrt{\frac{2\ln(16wnT/\delta)}{w\ell}}.
	\]
	By Lemma~\ref{lem:hh_estimate_error} this means GlobalUpdate$^t \gets$ False,
	so by Lemma~\ref{lem:vote_yes} for every subgroup $j \in [m]$ there
	exists user $i \in S_j$ such that
	\[	
		||\Phi \hat p_i^t - \Phi \hat p_i^{\last{t}}||_\infty \leq
		2(\log(T)+1)\sqrt{\frac{2\ln(16wnT/\delta)}{w\ell}}.
	\]
	Lemma~\ref{lem:hh_local_error} then implies that
	\[
		||\Phi p_{\subpop{i}}^t - \Phi p_{\subpop{i}}^{\last{t}}||_\infty
		\leq 2(\log(T) + 2)\sqrt{\frac{2\ln(16wnT/\delta)}{w\ell}}.
	\]
	Since this holds for every subgroup $j$, we get that
	\[
		||\Phi p^t - \Phi p^{\last{t}}||_\infty
		\leq 2(\log(T) + 2)\sqrt{\frac{2\ln(16wnT/\delta)}{w\ell}},
	\]
	and since GlobalUpdate$^t \gets$ False, by Lemma~\ref{lem:hh_estimate_error}
	this means that $\tilde y^t = \tilde y^{\last{t}}$, so
	\[
		||\tilde y^t - \Phi p^t|||_\infty
		\leq 2(\log(T) + 2)\sqrt{\frac{2\ln(16wnT/\delta)}{w\ell}}.
	\]
	Plugging this quantity into Lemma~\ref{lem:translation} as $B$ gives that
	for all $v \in [d]$
	\[
		|\hat f^t(v) - \cP^t(v)| <
		2(\log(T) + 2)\sqrt{\frac{2\ln(16wnT/\delta)}{\ell}} 
		+ \sqrt{\frac{\ln(\tfrac{16ndT}{\delta})}{n}}.
	\]
	Substituting $w = 20n$ yields the claim.
\end{proof}

\end{document}